\documentclass{article}

\usepackage[preprint]{neurips_2026}
\usepackage{amsmath}
\usepackage{amssymb}
\usepackage{xcolor} %
\definecolor{VN}{rgb}{0.278, 0.4, 0.898}
\definecolor{AN}{rgb}{0.898, 0.624, 0.278}
\definecolor{BLN}{rgb}{0.418, 0.894, 0.278}
\definecolor{DMN}{rgb}{0.902, 0.275, 0.424}
\definecolor{SMN}{rgb}{0.537, 0.275, 0.894}
\definecolor{SN}{rgb}{0.894, 0.827, 0.271}
\definecolor{MN}{rgb}{0.275, 0.898, 0.827}
\definecolor{CCN}{rgb}{0.075, 0.624, 1.0}

\usepackage[utf8]{inputenc} %
\usepackage[T1]{fontenc}    %
\usepackage{hyperref}       %
\usepackage{url}            %
\usepackage{booktabs}       %
\usepackage{amsfonts}       %
\usepackage{nicefrac}       %
\usepackage{microtype}      %
\usepackage{graphicx}
\usepackage[ruled,linesnumbered]{algorithm2e}
\usepackage{wrapfig}   %

\usepackage{amsthm}
\theoremstyle{definition}
\newtheorem{proposition}{Proposition}

\title{Information Bottleneck-Guided Adaptive Hypergraph Transformer for Brain Disease Diagnosis}
\author{Jingxi Feng \quad Xudong Chen \quad Yifan Zhang \quad Heming Xu \\
Hongcheng Han \quad Xijing Wang \quad Dong Zhang \quad Shaoyi Du}

\hypersetup{pdfauthor={Jingxi Feng, Xudong Chen, Yifan Zhang, Heming Xu, Hongcheng Han, Xijing Wang, Dong Zhang, Shaoyi Du}, pdftitle={Information Bottleneck-Guided Adaptive Hypergraph Transformer for Brain Disease Diagnosis}}

\begin{document}
\maketitle
\begin{abstract}
Exploring high-order correlations and long-range dependencies in brain networks holds significant value for both neuroscience research and clinical diagnosis. However, previous studies have lacked a unified integration of high-order and long-range dependency information in brain networks, and there is substantial redundancy behind various types of information. These issues limit their effectiveness in the diagnosis of brain diseases. To address this, we propose an \textbf{I}nformation \textbf{B}ottleneck-Guided \textbf{A}daptive \textbf{H}yper\textbf{G}raph \textbf{T}ransformer (IBAHGT). By incorporating the information bottleneck (IB) principle, this approach enables adaptive learning of high-order correlations and both short- and long-range dependencies within a unified framework for brain network analysis, achieving high-precision brain disease diagnosis. IBAHGT consists of three key components: an information bottleneck-guided adaptive hypergraph convolution, which introduces a novel hypergraph information bottleneck (HIB) principle to adaptively learn hypergraph message-passing weights between nodes and hyperedges, optimizes information flow and captures high-order information in brain networks that is maximally informative and minimally redundant (MIMR).
The Transformer encoder captures global information within brain networks through the attention mechanism, specifically modeling short- and long-range dependencies. An information bottleneck-guided node-level adaptive fusion employs the IB principle to learn independent weights for each node, facilitating the fine-grained integration of high-order information and global information to obtain an efficient representation for downstream tasks. Extensive experiments demonstrate that the proposed method outperforms current state-of-the-art methods and can identify biomarkers for clinical applications.
\end{abstract}
  
\section{Introduction}

The brain network describes the interactions and information processing patterns between regions of interest (ROIs) \cite{avena2018communication}. Brain network analysis, which explores interactions between ROIs, reveals the brain's functional organization and is crucial for understanding the pathogenesis of neurological diseases and their early intervention \cite{liu2017complex,supekar2008network}.
As shown in Fig.\ref{11}, many studies have shown the presence of extensive high-order correlations \cite{benson2016higher}, along with short- and long-range dependencies, within brain networks \cite{park2013structural,barttfeld2011big}.
High-order correlations reflect the joint activation of multiple ROIs during functional activities such as cognitive tasks,  and are crucial for understanding the activity patterns of the brain’s neural system \cite{von1994correlation}. Furthermore, during functional activities, the brain selectively integrates information through the coordinated work of high-order neural circuits to improve task execution efficiency and accuracy \cite{luna2004emergence}. Short-range dependencies reflect the interactions between ROIs in the neighboring space \cite{dajani2016local}, while long-range dependencies reflect long-distance communication between ROIs and play a critical role in understanding brain functions \cite{deco2021rare} and communication \cite{seguin2023brain}. Therefore, it is necessary to simultaneously capture the high-order correlations and long-range dependencies in the brain network for unified analysis.
\begin{wrapfigure}{r}{0.5\textwidth} 
    \centering  %
    \includegraphics[width=0.5\textwidth]{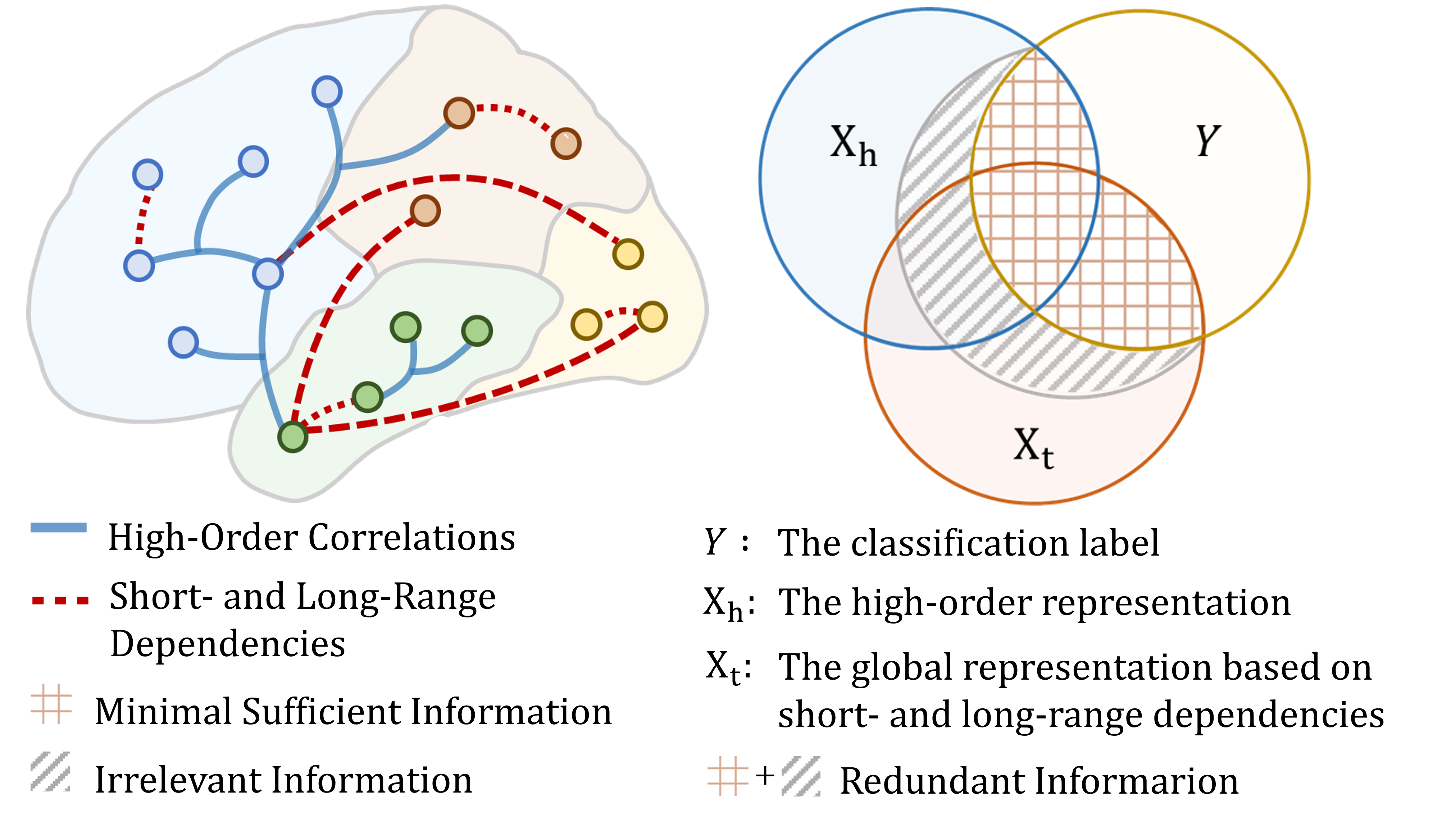} 
    \caption{The brain network contains extensive high-order correlations as well as short-range and long-range dependencies. Multi-source information fusion, guided by the information bottleneck, aims to optimize the fused representation by capturing the minimal sufficient information from high-order representation $\mathbf{X}_h$ and global representation $\mathbf{X}_t$ for predicting the classification label $Y$.}  %
    \label{11}
\end{wrapfigure}

Many studies use graph learning methods for brain network analysis \cite{bessadok2022graph, qu2021brain}, modeling the brain network as a graph and capturing pairwise dependencies in feature interactions through Graph Neural Networks (GNNs).
However, these methods overlook the crucial high-order correlations and fail to capture the collaborative interactions among multiple ROIs involved in brain functions. Hypergraphs, a mathematical model that can express high-order correlations, have been increasingly used in brain network analysis \cite{niu2023applications}. For example, FC-HAT effectively captures high-order information in brain networks through hypergraph attention networks \cite{ji2022fc}, thereby improving the accuracy of brain disease diagnosis.

GNNs and Hypergraph Neural Networks (HGNNs) \cite{feng2019hypergraph,bai2021hypergraph}, which aggregate neighborhood information through message passing, struggle to capture long-range dependencies in brain networks. Unlike the local information aggregation mechanisms of GNNs and HGNNs, the attention mechanism in Transformer can capture long-range dependencies between nodes \cite{vaswani2017attention}. Brain network Transformer \cite{kan2022brain} has introduced Transformer into brain network analysis to learn long-range dependencies between ROIs, achieving promising performance in brain disease diagnosis. 

However, these methods fail to integrate the high-order correlations and long-range dependencies according to the specific characteristics of each ROI. Furthermore, as shown in Fig.\ref{11}, both the high-order information captured by general hypergraph convolution and the information captured based on different dependencies contain substantial redundancy, which dilutes the disease-related discriminative information. Thus, it is crucial to condense the information during the extraction of high-order features and the fusion of multi-source information to achieve efficient representations.

To address these issues, we propose an information bottleneck-guided adaptive hypergraph Transformer method that effectively captures the MIMR high-order correlations as well as short- and long-range dependencies in the brain network.
Specifically, first, an information bottleneck-guided adaptive hypergraph convolution introduces an innovative hypergraph information bottleneck (HIB) principle to optimize the message passing process, achieving the goal of capturing MIMR high-order information.
Secondly, the Transformer encoder captures short- and long-range dependencies between ROIs through the self-attention mechanism. Finally, an information bottleneck-guided node-level adaptive fusion method considers the unique roles and needs of each ROI, and adaptively learns the weights for each node (i.e., ROI) to fuse representations that contain high-order information and global information.
In summary, this paper makes three main contributions:
\begin{itemize}
    \item[1)]  We propose an information bottleneck-guided adaptive hypergraph Transformer method that captures high-order correlations as well as short- and long-range dependencies in the brain network within a unified framework, achieving high-precision brain disease diagnosis.
    \item[2)] We introduce an information bottleneck-guided adaptive hypergraph convolution method. By incorporating the HIB principle, it adaptively learns the message passing weights between nodes and hyperedges, capturing MIMR high-order information in the brain network.
    \item[3)] We propose an information bottleneck-guided node-level adaptive fusion method that fine-grained integrates high-order and global information in the brain network. Further optimization through the IB principle effectively eliminates redundancy during multi-source information fusion, resulting in efficient representations.
    
\end{itemize}

\section{Related work}
\subsection{Deep Learning-Based Brain Network Analysis}
GNNs are commonly used deep learning methods for graph-structured brain networks learning \cite{yang2023mapping,ye2023rh}. BrainGNN \cite{li2021braingnn} introduces a novel ROI-aware graph convolution (Ra-GConv) layer for functional brain network analysis. However, graph-based methods overlook the high-order correlations that are widespread in brain networks. Recent studies have incorporated hypergraph learning methods into brain network analysis \cite{ji2022fc}. For example, dwHGCN \cite{wang2023dynamic} learns high-order correlations in brain networks through dynamic hypergraph learning with learnable hyperedge weights.
Graph- or hypergraph-based methods are unable to effectively capture long-range dependencies between ROIs. BrainNetTF \cite{kan2022brain} introduces a Transformer encoder to learn long-range dependencies and has demonstrated outstanding performance in brain disease diagnosis tasks.
However, the above methods fail to provide a unified analysis that incorporates both the high-order correlations as well as short- and long-range dependencies present in brain networks.

\subsection{Information Bottleneck}
The Information Bottleneck (IB) \cite{tishby2000information} is a data compression technique aimed at encouraging representations derived from raw data to capture the maximum relevant information about the target, while excluding extraneous information unrelated to the prediction task. Deep learning methods based on the IB principle have been widely applied in various fields, such as computer vision \cite{luo2019significance,peng2018variational} and natural language processing \cite{wang2020learning}. Recently, BrainIB \cite{zheng2024brainib} has incorporated the IB principle into brain network analysis, effectively identifying disease-specific prominent brain network connections, thus enabling interpretable brain disease diagnosis.
    
\section{Method}

\begin{figure}[t]
\includegraphics[width=\textwidth]{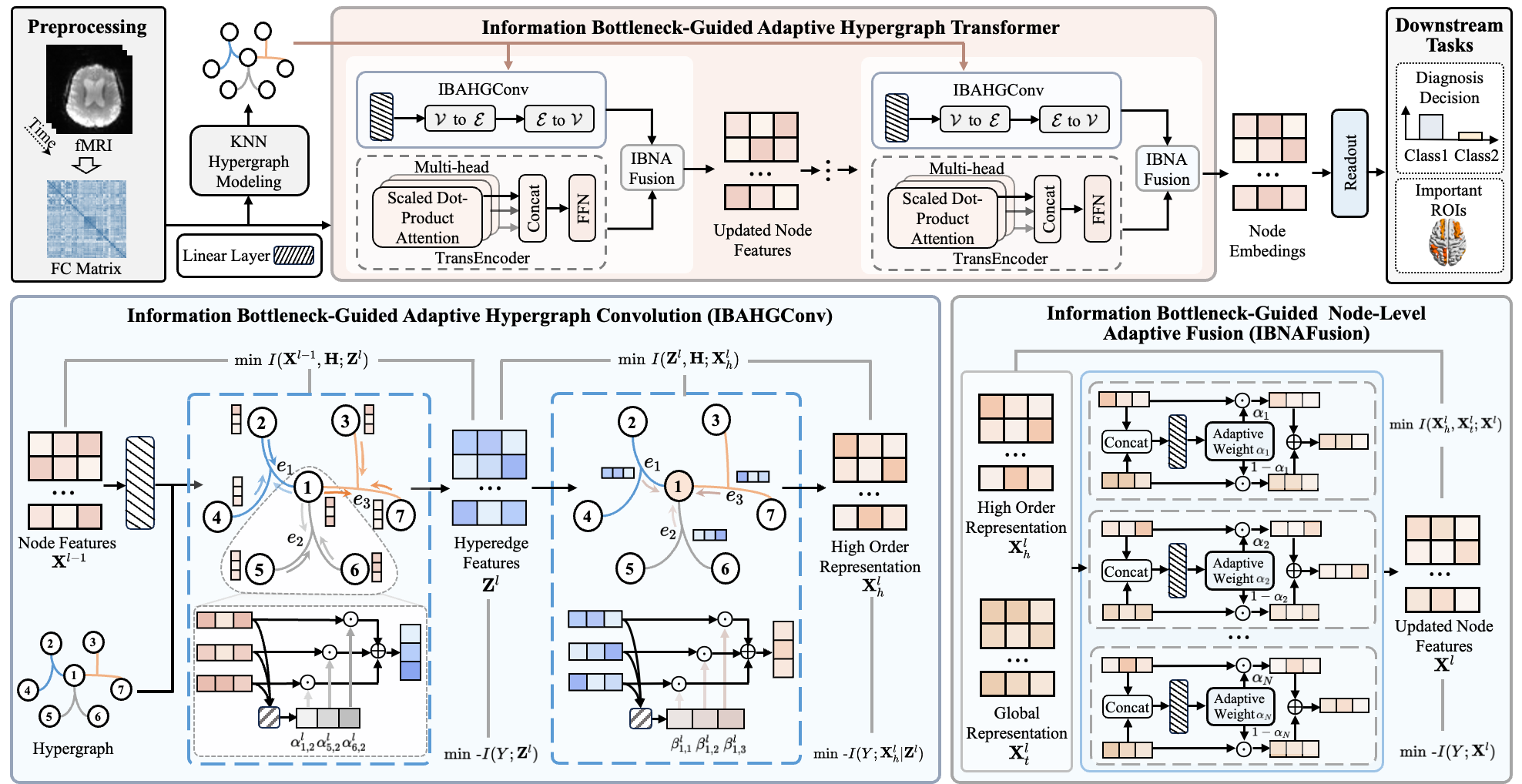}
\caption{The overall framework of the proposed IBAHGT.}
\label{fig:architecture}
\end{figure}
In brain network analysis, the ROIs are treated as nodes, and the functional connectivity (FC) matrix $\mathbf{X}^0 \in \mathbb{R}^{N \times N}$ is obtained by calculating Pearson correlation between ROIs, serving as the initial node features. 
Our goal is to comprehensively capture various types of correlations within brain networks and distill this multi-source information into efficient feature representations for predicting the target $Y$, (i.e., the ground truth class label).
The overall framework of IBAHGT is shown in Fig.\ref{fig:architecture}, with its core consisting of $L$ layers of information bottleneck-guided adaptive hypergraph Transformer layers, each comprising three main components:
information bottleneck-guided adaptive hypergraph convolution, an adaptive message passing process guided by the HIB principle, which produces high-order representation $\mathbf{X}_h^l$. Transformer encoder captures short- and long-range dependencies to obtain global representation $\mathbf{X}_t^l$. 
Information bottleneck-guided adaptive node-level fusion learns the fusion weights for each node, enabling fine-grained integration of MIMR high-order and global information to obtain updated node features $\mathbf{X}^l$. After $L$ layers, the final node embeddings $\mathbf{X}^L$ is flattened and fed into a multilayer perceptron (MLP) to predict class label.
\subsection{Information Bottleneck-Guided Adaptive Hypergraph Convolution}
Hypergraph-based methods can model high-order correlations among ROIs. However, general hypergraph convolution employs equal-weight message passing when capturing high-order information, which introduces substantial redundant information. To address this, the information bottleneck-guided adaptive hypergraph convolution adaptively learns weights during the message passing and introduces the HIB principle to optimize this process, capturing the MIMR high-order information.
\subsubsection{Adaptive Hypergraph Convolution}
The hypergraph is constructed using the K
Nearest Neighbor (KNN) method to model high-order correlations in the brain network. Specifically, ROIs are represented as a set of nodes $\mathcal{V} = \{v_1, v_2, \ldots, v_{N}\}$. The hypergraph structure is represented by an incidence matrix $\mathbf{H} \in \mathbb{R}^{N \times E}$, reflecting the node-hyperedge relation. The proximity between nodes is measured by the inverse of the Euclidean distance based on the node features $\mathbf{X}^0$.

Based on the constructed hypergraph structure, the adaptive hypergraph convolution performs a two-stage message passing process: adaptive gathering of node features to hyperedges and adaptive aggregating of hyperedge features to nodes.
In the first stage, each node adaptively learns the weight and performs weighted aggregation to obtain the MIMR hyperedge feature $\mathbf{Z}^l$, with the HIB principle introduced to optimize the weight learning process.
In the $l$-th layer, for a given hyperedge $e_j$, the process of generating the hyperedge feature $\mathbf{Z}_j^l$ based on the features of the nodes in its associated node set $\mathcal{N}_v(e_j)$ can be expressed as follows: 
\begin{equation}
    \mathbf{Z}_j^l = \sigma\left(\sum_{v_i \in \mathcal{N}_v(e_j)} \alpha_{i,j}^l \mathbf{W}_1^l \mathbf{X}_i^{l-1}\right), \quad \alpha_{i,j}^l = \frac{\exp{\left(\mathbf{a}_1^l \mathbf{W}_1^l \mathbf{X}_i^{l-1}\right)}}{\sum_{v_k \in \mathcal{N}_v(e_j)} \exp{\left(\mathbf{a}_1^l \mathbf{W}_1^l \mathbf{X}_k^{l-1}\right)}}\,,
\end{equation}
where $\mathbf{a}_1^l$ is a learnable vector, $\mathbf{W}^l_1$ is a learnable weight matrix, $\sigma$ represents the nonlinear activation function. $\alpha^l_{i,j}$ represents the weight when the feature of node $v_i$ is aggregated into the hyperedge $e_j$.

In the second stage, each hyperedge adaptively learns the weight for the information transfer back to the nodes and performs a weighted aggregation to obtain the high-order representation $\mathbf{X}_h^l$. This process is also optimized through the HIB principle.
For node $v_i$, the process of weighted aggregation of the hyperedge features from its connected hyperedge set $\mathcal{N}_e(v_i)$ to obtain the high-order representation $\mathbf{X}_{h,i}^l$ is formalized as follows:
\begin{equation}
    \mathbf{X}_{h,i}^l = \sigma\left(\sum_{e_j \in \mathcal{N}_e(v_i)} \beta_{i,j}^l \mathbf{W}_2^l \mathbf{Z}_j^l \right), \quad \beta_{i,j}^l = \frac{\exp{(\mathbf{a}_2^l \mathbf{W}_2^l \mathbf{Z}_j^l)}}{\sum_{e_p \in \mathcal{N}_e(v_i)} \exp{(\mathbf{a}_2^l \mathbf{W}_2^l \mathbf{Z}_p^l)}}\,,
\end{equation}

where $\mathbf{a}_2^l$ is a learnable vector, $\mathbf{W}^l_2$ is a learnable weight matrix, and $\beta^l_{ij}$ represents the weight when the feature of hyperedge $e_j$ is transmitted back to node $v_i$.
\subsubsection{Optimization Based on the HIB Principle}\label{sec:section3.3.2}
There is redundancy in the high-order information within the brain captured by adaptive hypergraph convolution, for this reason, we propose the HIB principle to optimize the information flow process to capture the MIMR high-order information.
In the $l$-th layer of adaptive hypergraph convolution, the optimization objective based on the HIB principle is:
\begin{equation}
    \min_{\mathbb{P}(\mathbf{Z}^{l}, \mathbf{X}^l \mid \mathbf{H}, \mathbf{X}^{l-1}) \in \Omega} \,\mbox{HIB}_{\gamma_1} (\mathbf{H}, \mathbf{X}^{l-1},Y; \mathbf{Z}^l,\mathbf{X}_h^{l})\triangleq -I(Y;\mathbf{Z}^l,\mathbf{X}_h^l)+\gamma_1I(\mathbf{X}^{l-1},\mathbf{H};\mathbf{Z}^l,\mathbf{X}_h^l)\,.
    \label{eq:hib}
\end{equation}

Minimizing $-I(Y;\mathbf{Z}^l,\mathbf{X}_h^l)$ ensures that the hypergraph message passing process focuses on task-relevant information, while minimizing $I(\mathbf{X}^{l-1},\mathbf{H};\mathbf{Z}^l,\mathbf{X}_h^l)$ effectively removes redundant information.
$\gamma_1$ is a hyperparameter that balances information and redundancy. However, accurately calculating mutual information terms in Eq.(\ref{eq:hib}) is intractable. 
Therefore, we introduce the lower bound of $I(Y;\mathbf{Z}^l,\mathbf{X}_h^l)$ and the upper bound of $I(\mathbf{X}_h^l,\mathbf{Z}^l;\mathbf{X}^{l-1},\mathbf{H})$ for subsequent estimation.

\begin{proposition}
\label{prop1}
  For any distribution $\mathbb{Q}_1(Y | \mathbf{Z}^{l})$ and $\mathbb{Q}_2(Y)$, we have
\begin{equation}
I(Y; \mathbf{Z}^{l},\mathbf{X}_h^l) = I(Y; \mathbf{Z}^{l})+I(Y;\mathbf{X}_h^l\mid\mathbf{Z}^{l})\,.
\label{cross}
\end{equation}
To represent the lower bounds of two terms on the right-hand side of the equation, in the adaptive hypergraph convolution layer, $\mathbf{Z}^l$ and $\mathbf{X}_h^l$ are read out and then mapped through the learnable parameter matrix to the predicted labels $\hat{y}^l_z$ and $\hat{y}^l_x$, respectively.
Therefore, the two terms on the right-hand side of the Eq.(\ref{cross}) can reduce to the cross-entropy loss without constants as follows:
\begin{equation}
    I(Y; \mathbf{Z}^{l}) \rightarrow -\mathcal{L}_{CE}(\hat{y}^l_z,Y) \quad , \quad I(Y; \mathbf{X}_h^{l}\mid\mathbf{Z}^{l}) \rightarrow -\mathcal{L}_{CE}(\hat{y}^l_x,Y)\,.
\end{equation}

The proof of the above process is given in Appendix \ref{app:appendixb}. Subsequently, we derive the upper bound of the second term in Eq.(\ref{eq:hib}).
\end{proposition}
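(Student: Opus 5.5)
The plan has two parts: first the decomposition in Eq.(\ref{cross}), then a variational lower bound for each of its two terms.

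\textbf{The decomposition.} Eq.(\ref{cross}) is the chain rule for mutual information. Write $I(Y;\mathbf{Z}^l,\mathbf{X}_h^l)=H(Y)-H(Y\mid \mathbf{Z}^l,\mathbf{X}_h^l)$ and insert $\pm H(Y\mid\mathbf{Z}^l)$. This gives $[H(Y)-H(Y\mid\mathbf{Z}^l)]+[H(Y\mid\mathbf{Z}^l)-H(Y\mid\mathbf{Z}^l,\mathbf{X}_h^l)]$, which is exactly $I(Y;\mathbf{Z}^l)+I(Y;\mathbf{X}_h^l\mid\mathbf{Z}^l)$. This step uses no assumptions on $\mathbb{Q}_1,\mathbb{Q}_2$.

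\textbf{First term.} For $I(Y;\mathbf{Z}^l)$ I would use the standard Barber--Agakov bound. Write
\[
I(Y;\mathbf{Z}^l)=\mathbb{E}_{\mathbb{P}(Y,\mathbf{Z}^l)}\!\left[\log\frac{\mathbb{Q}_1(Y\mid\mathbf{Z}^l)}{\mathbb{Q}_2(Y)}\right]+\mathbb{E}_{\mathbf{Z}^l}\,\mathrm{KL}\big(\mathbb{P}(Y\mid\mathbf{Z}^l)\,\|\,\mathbb{Q}_1(Y\mid\mathbf{Z}^l)\big)-\mathrm{KL}\big(\mathbb{P}(Y)\,\|\,\mathbb{Q}_2(Y)\big).
\]
The first KL term is nonnegative, so it can be dropped in the direction of a lower bound. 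The second KL term enters with a minus sign and cannot simply be dropped. There are two ways to handle it:
\begin{itemize}
\item take $\mathbb{Q}_2=\mathbb{P}(Y)$, so it vanishes; or
\item keep $\mathbb{Q}_2$ general and note that $\mathbb{E}[\log \mathbb{Q}_2(Y)]$ does not depend on the network, so it contributes only a constant.
\end{itemize}
Either way we get
\[
I(Y;\mathbf{Z}^l)\ge \mathbb{E}[\log\mathbb{Q}_1(Y\mid\mathbf{Z}^l)]+\text{const}.
\]
This is the careful point, since $\mathbb{Q}_2$ is stated as arbitrary. Parametrizing $\mathbb{Q}_1$ as softmax(readout of $\mathbf{Z}^l$ composed with a learnable matrix) gives $\hat y^l_z$, and $-\mathbb{E}\log\mathbb{Q}_1=\mathcal{L}_{CE}(\hat y^l_z,Y)$.

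\textbf{Second term.} For $I(Y;\mathbf{X}_h^l\mid\mathbf{Z}^l)=H(Y\mid\mathbf{Z}^l)-H(Y\mid\mathbf{Z}^l,\mathbf{X}_h^l)$, I would bound $-H(Y\mid\mathbf{Z}^l,\mathbf{X}_h^l)\ge\mathbb{E}[\log\mathbb{Q}(Y\mid\mathbf{Z}^l,\mathbf{X}_h^l)]$ by the same nonnegative-KL argument.

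The main obstacle is matching this to the paper's parametrization, where $\hat y^l_x$ depends only on $\mathbf{X}_h^l$. I would handle it in two steps:
\begin{itemize}
\item Observe that $\mathbf{X}_h^l$ is a deterministic function of $\mathbf{Z}^l$ by Eq.(2), given the fixed $\mathbf{H}$. Hence $\mathbb{P}(Y\mid\mathbf{Z}^l,\mathbf{X}_h^l)$ may be modelled by a variational distribution on $\mathbf{X}_h^l$ alone. This is a restriction of the variational family, so the bound is preserved.
\item Treat $H(Y\mid\mathbf{Z}^l)$ as a term that, in the optimization of this second readout, plays the role of a constant. Equivalently, it is already controlled by the first term, and summing the two bounds only double-counts a quantity independent of $\mathbb{Q}$.
\end{itemize}
This yields $I(Y;\mathbf{X}_h^l\mid\mathbf{Z}^l)\to-\mathcal{L}_{CE}(\hat y^l_x,Y)$ up to constants, which is the sense of ``$\rightarrow$'' in the statement.

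I expect this second term to be the weakest step. The honest claim is a reduction up to terms not depending on the $\hat y^l_x$ head, rather than a strict inequality, and I would state it that way.
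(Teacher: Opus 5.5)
Your route is correct in substance but differs from the paper's. The paper takes the chain rule as given and derives both bounds from the Nguyen--Wainwright--Jordan inequality $I(X_1;X_2)\ge\mathbb{E}[g]-\mathbb{E}_{\mathbb{P}(X_1)\mathbb{P}(X_2)}[e^{g-1}]$, using the critic $g=1+\log\bigl(\mathrm{Cat}(\hat y^l_z)/\mathbb{P}(Y)\bigr)$. Since $\mathbb{E}_{\mathbb{P}(Y)\mathbb{P}(\mathbf{Z}^l)}\bigl[\mathrm{Cat}(\hat y^l_z)/\mathbb{P}(Y)\bigr]=1$, this collapses to $H(Y)-\mathcal{L}_{CE}(\hat y^l_z,Y)$, which is exactly your Barber--Agakov bound. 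Your derivation buys three things:
\begin{itemize}
\item it is more elementary;
\item it shows that $\mathbb{Q}_2$ cancels identically, so ``for any $\mathbb{Q}_2$'' holds automatically and you never need to choose $\mathbb{Q}_2=\mathbb{P}(Y)$;
\item it avoids the sign slip in the paper's displayed bound, where the product-measure expectation should enter with a minus sign.
\end{itemize}
For the conditional term, the paper simply applies the lemma ``also'' with the same $\mathbb{P}(Y)$ in the critic. That does not work as written. The reference measure for $I(Y;\mathbf{X}_h^l\mid\mathbf{Z}^l)$ is $\mathbb{P}(\mathbf{Z}^l)\mathbb{P}(Y\mid\mathbf{Z}^l)\mathbb{P}(\mathbf{X}_h^l\mid\mathbf{Z}^l)$, and under it that expectation is no longer $1$. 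Putting $\mathbb{P}(Y\mid\mathbf{Z}^l)$ in the critic instead gives $H(Y\mid\mathbf{Z}^l)-\mathcal{L}_{CE}(\hat y^l_x,Y)$. This is precisely what your entropy decomposition produces, so your version makes explicit the leftover term that the paper glosses over.

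The step to change is how you dispose of $H(Y\mid\mathbf{Z}^l)$. Your fallback claim, ``up to terms not depending on the $\hat y^l_x$ head,'' is true but too weak. The HIB objective is minimized over the encoder, and $H(Y\mid\mathbf{Z}^l)$ depends on $\mathbf{W}_1^l$, $\mathbf{a}_1^l$ and all earlier layers. $\mathcal{L}_{CE}(\hat y^l_x,Y)$ is also backpropagated into those parameters through $\mathbf{X}_h^l$, so $H(Y\mid\mathbf{Z}^l)$ is not a constant of the optimization. Nor is it ``already controlled by the first term.'' That bound gives $H(Y\mid\mathbf{Z}^l)\le\mathcal{L}_{CE}(\hat y^l_z,Y)$, an upper bound, whereas here $H(Y\mid\mathbf{Z}^l)$ enters a lower bound with a plus sign.

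The repair is one line. $Y$ is a discrete label, so $H(Y\mid\mathbf{Z}^l)\ge 0$ and can be dropped. This gives the genuine inequality $I(Y;\mathbf{X}_h^l\mid\mathbf{Z}^l)\ge-\mathcal{L}_{CE}(\hat y^l_x,Y)$. Summing with the first bound gives $I(Y;\mathbf{Z}^l,\mathbf{X}_h^l)\ge H(Y)-\mathcal{L}_{CE}(\hat y^l_z,Y)-\mathcal{L}_{CE}(\hat y^l_x,Y)$, which is what the relevance term of the HIB objective needs. On its own the conditional bound is weak, since its right-hand side is nonpositive, but it is a true inequality rather than an ``up to constants'' statement.

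Also drop the remark that $\mathbf{X}_h^l$ is a deterministic function of $\mathbf{Z}^l$. It is unnecessary: the nonnegative-KL argument holds for every conditional $\mathbb{Q}$, including one that ignores $\mathbf{Z}^l$. It also conflicts with the paper, where $\mathbf{X}_h^l$ is sampled from a Gaussian $\mathbb{P}(\mathbf{X}_h^l\mid\mathbf{Z}^l,\mathbf{H})$; otherwise $\mbox{XIB}^l$ in Proposition~\ref{prop2} would be infinite. And if $\mathbf{X}_h^l$ really were deterministic given $\mathbf{Z}^l$, then $I(Y;\mathbf{X}_h^l\mid\mathbf{Z}^l)$ would be identically zero, making the second term vacuous.
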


\begin{proposition}
\label{prop2}
For any distributions $\mathbb{Q}(\mathbf{Z}^{l})$ and $\mathbb{Q}(\mathbf{X}_h^{l})$, we have
\begin{equation}
    I(\mathbf{X}^{l-1}, \mathbf{H}; \mathbf{X}_h^{l}, \mathbf{Z}^l) 
     \le I(\mathbf{X}^{l-1}, \mathbf{H}; \mathbf{Z}^l) + I(\mathbf{Z}^{l}, \mathbf{H};\mathbf{X}_h^l) \le \mbox{ZIB}^l+\mbox{XIB}^l\,,
\end{equation}
\begin{equation}
    \mbox{ZIB}^l = D_{KL} \left( \mathbb{P}(\mathbf{Z}^{l} \mid \mathbf{X}^{l-1},\mathbf{H}) \parallel \mathbb{Q}(\mathbf{Z}^{l}) \right) , \
    \mbox{XIB}^l = D_{KL} \left( \mathbb{P}(\mathbf{X}_h^{l} \mid \mathbf{Z}^{l},\mathbf{H}) \parallel 
    \mathbb{Q}(\mathbf{X}_h^{l}) \right)\,.
\end{equation}
The proof of the proposition can be
found in Appendix \ref{app:appendixb}. To estimate $\mbox{ZIB}^l$ and $\mbox{XIB}^l$, we set both $\mathbb{Q}(\mathbf{Z}^{l})$ and $\mathbb{Q}(\mathbf{X}_h^{l})$ as a mixture of Gaussians with learnable parameters \cite{dilokthanakul2016deep}. Concretely, set $\mathbb{Q}(\mathbf{Z}^l) \sim \sum_{i=1}^{m} w_i \, \mbox{Gaussian}(\mu_{0,i}, \sigma_{0,i}^2)$ and $\mathbb{Q}(\mathbf{X}_h^l) \sim \sum_{i=1}^{m} \hat{w}_i \, \mbox{Gaussian}(\hat{\mu}_{0,i}, \hat{\sigma}_{0,i}^2)$, where $w_i, \mu_{0,i}, \sigma_{0,i}$ and $\hat{w}_i, \hat{\mu}_{0,i}, \hat{\sigma}_{0,i}$ are learnable parameters.
Based on the aggregated $\mathbf{Z}^l$ and $\mathbf{X}^l$, the parameters are calculated and samples are drawn from a Gaussian distribution to obtain $\mathbb{P}(\mathbf{Z}^{l} \mid \mathbf{X}^{l-1},\mathbf{H}) \sim \mbox{Gaussian}(\mu_{l}, \sigma_{l}^2)$ and $\mathbb{P}(\mathbf{X}_h^{l} \mid \mathbf{Z}^{l},\mathbf{H}) \sim \mbox{Gaussian}(\hat{\mu}_{l}, \hat{\sigma}_{l}^2)$. The estimation of $\mbox{ZIB}^l$ and $\mbox{XIB}^l$ can be written as:
\begin{equation}
\widehat{\mbox{ZIB}}^l=\log ( \Phi( \mathbf{Z}^l; \mu_{l}, \sigma_{l}^2 ) ) 
   - \log ( \sum_{i=1}^m w_i\Phi( \mathbf{Z}^l; \mu_{0,i}, \sigma_{0,i}^2 ) )\,,
\end{equation}
\begin{equation}
\widehat{\mbox{XIB}}^l=\log ( \Phi( \mathbf{X}_h^l; \hat{\mu}_{l}, \hat{\sigma}_{l}^2 ) )-\log ( \sum_{i=1}^m \hat{w}_i\Phi( \mathbf{X}_h^l; \hat{\mu}_{0,i}, \hat{\sigma}_{0,i}^2 ) )\,.
\end{equation}

Plugging the estimations of both mutual information terms in Eq.(\ref{eq:hib}), the loss function for the information bottleneck-guided adaptive hypergraph convolution in the $l$-th layer can be obtained:
\begin{equation}
    \mathcal{L}^l_{H} = \mathcal{L}_{CE}(\hat{y}^l_z,Y)+ \mathcal{L}_{CE}(\hat{y}^l_x,Y)+
    \gamma_1[\widehat{\mbox{ZIB}}^l+\widehat{\mbox{XIB}}^l]\,.
\end{equation}
\end{proposition}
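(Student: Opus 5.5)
The proof splits into two inequalities, established in order. Throughout we use the Markov structure of the adaptive hypergraph convolution: $\mathbf{Z}^l$ is generated from $(\mathbf{X}^{l-1},\mathbf{H})$ alone, and $\mathbf{X}_h^l$ is generated from $(\mathbf{Z}^l,\mathbf{H})$ alone. In other words, the stochastic maps $\mathbb{P}(\mathbf{Z}^l\mid\mathbf{X}^{l-1},\mathbf{H})$ and $\mathbb{P}(\mathbf{X}_h^l\mid\mathbf{Z}^l,\mathbf{H})$ defined by the two stages carry all dependence. Hence
\begin{equation*}
\mathbf{X}_h^l \perp \mathbf{X}^{l-1} \mid (\mathbf{Z}^l,\mathbf{H}).
\end{equation*}

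For the first inequality, write $A=(\mathbf{X}^{l-1},\mathbf{H})$. The chain rule gives
\begin{equation*}
I(A;\mathbf{X}_h^l,\mathbf{Z}^l)=I(A;\mathbf{Z}^l)+I(A;\mathbf{X}_h^l\mid\mathbf{Z}^l).
\end{equation*}
The second term is the one to bound. We have $I(A;\mathbf{X}_h^l\mid\mathbf{Z}^l)=I(\mathbf{H};\mathbf{X}_h^l\mid\mathbf{Z}^l)+I(\mathbf{X}^{l-1};\mathbf{X}_h^l\mid\mathbf{Z}^l,\mathbf{H})$, and the last summand vanishes by the conditional independence above. Then $I(\mathbf{H};\mathbf{X}_h^l\mid\mathbf{Z}^l)\le I(\mathbf{Z}^l,\mathbf{H};\mathbf{X}_h^l)$, because $I(\mathbf{Z}^l,\mathbf{H};\mathbf{X}_h^l)=I(\mathbf{Z}^l;\mathbf{X}_h^l)+I(\mathbf{H};\mathbf{X}_h^l\mid\mathbf{Z}^l)$ and mutual information is nonnegative. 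This yields the first inequality.

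For the second inequality we apply the standard variational upper bound to each term separately. For any $\mathbb{Q}(\mathbf{Z}^l)$,
\begin{equation*}
I(A;\mathbf{Z}^l)=\mathbb{E}_{A}\big[D_{KL}(\mathbb{P}(\mathbf{Z}^l\mid A)\,\|\,\mathbb{Q}(\mathbf{Z}^l))\big]-D_{KL}(\mathbb{P}(\mathbf{Z}^l)\,\|\,\mathbb{Q}(\mathbf{Z}^l)),
\end{equation*}
which follows by adding and subtracting $\log\mathbb{Q}(\mathbf{Z}^l)$ inside $\mathbb{E}\log\frac{\mathbb{P}(\mathbf{Z}^l\mid A)}{\mathbb{P}(\mathbf{Z}^l)}$. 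Dropping the nonnegative marginal KL gives $I(A;\mathbf{Z}^l)\le\mbox{ZIB}^l$. The same computation with conditioning variable $(\mathbf{Z}^l,\mathbf{H})$ and reference $\mathbb{Q}(\mathbf{X}_h^l)$ gives $I(\mathbf{Z}^l,\mathbf{H};\mathbf{X}_h^l)\le\mbox{XIB}^l$. Here $\mbox{ZIB}^l$ and $\mbox{XIB}^l$ are understood as expected KL divergences over the conditioning variables. The single-sample estimators $\widehat{\mbox{ZIB}}^l$ and $\widehat{\mbox{XIB}}^l$ are then Monte Carlo estimates: the log-density ratio evaluated at a sample drawn from the Gaussian posterior.

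The main obstacle is justifying the Markov property $\mathbf{X}_h^l\perp\mathbf{X}^{l-1}\mid(\mathbf{Z}^l,\mathbf{H})$ within this architecture. The attention weights $\beta_{i,j}^l$ depend only on $\mathbf{Z}^l$, and the Gaussian sampling of $\mathbf{X}_h^l$ is parameterized from $\mathbf{Z}^l$, so the property holds provided no skip connection from $\mathbf{X}^{l-1}$ enters the second stage. I would state this modeling assumption explicitly; it is the analogue of the local-dependence assumption used in graph information bottleneck analyses. If it failed, the first inequality would pick up an additional term $I(\mathbf{X}^{l-1};\mathbf{X}_h^l\mid\mathbf{Z}^l,\mathbf{H})$.
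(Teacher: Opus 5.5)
Your proof is correct and follows the paper's route. The appendix obtains the first inequality simply by appeal to the Markov chain of the adaptive hypergraph convolution; your chain-rule decomposition with $I(\mathbf{X}^{l-1};\mathbf{X}_h^l\mid\mathbf{Z}^l,\mathbf{H})=0$ makes that step explicit. The second inequality is obtained exactly as you do, by writing each mutual information as the expected KL to $\mathbb{Q}$ minus the nonnegative marginal KL $D_{KL}(\mathbb{P}(\cdot)\,\|\,\mathbb{Q}(\cdot))$ and dropping the latter. Stating the conditional-independence assumption outright is a genuine improvement, since the first inequality can fail without it.
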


\subsection{Transformer Encoder}
Hypergraph-based learning methods struggle to capture long-range dependencies, whereas the self-attention mechanism in the Transformer encoder can effectively model both short- and long-range dependencies between ROIs in the brain network. 
Formally, in the $l$-th layer, a multi-head self-attention mechanism is used to learn global representation $\mathbf{X}_t^l$:
\begin{equation}
    \mathbf{X}_t^{l} = \left(\left\|_{m = 1}^M \mathbf{h}^{l,m}\right) \mathbf{W}_O^l \right), \quad \mathbf{h}^{l, m} = \text{softmax}\left(  \frac{\mathbf{W}_Q^{l,m} \mathbf{X}^{l - 1} \left( \mathbf{W}_K^{l,m} \mathbf{X}^{l - 1} \right)^\top}{\sqrt{d_K^{l,m}}}\right)\mathbf{W}_V^{l, m} \mathbf{X}^{l-1}\,,
\end{equation}

where $\Vert$ denotes the concatenation operator, $M$ is the number of heads, $\mathbf{W}_O^l$, $\mathbf{W}_Q^{l, m}$, $\mathbf{W}_K^{l, m}$, and $\mathbf{W}_V^{l, m}$ are learnable model parameters, and $d_K^{l, m}$ is the first dimension of $\mathbf{W}_K^{l, m}$. 
\subsection{Information Bottleneck-Guided Node-Level Adaptive Fusion}
Considering the varying roles of each ROI in the brain network and their different tendencies in capturing various dependencies \cite{shine2016dynamics}, the information bottleneck-guided node-level adaptive fusion learns an independent weight for each node to balance the high-order information from the information bottleneck-guided adaptive hypergraph convolution and the short- and long-range dependencies information from the Transformer encoder.
\subsubsection{Node-Level Adaptive Fusion}
In the $l$-th layer of the information bottleneck-guided adaptive hypergraph Transformer, node-level adaptive fusion integrates the high-order representation $\mathbf{X}_{h,i}^l$ and global representation $\mathbf{X}_{t,i}^l$ for each node $v_i$, resulting in the updated node features $\mathbf{X}_i^l$, which can be formulated as follows:
\begin{equation}
    \mathbf{X}_i^l = \theta_i^l\mathbf{X}_{h,i}^l+(1-\theta_i^l)\mathbf{X}_{t,i}^l\,,
\end{equation}
where $\theta_i^l$ denotes the learnable fusion weight of node $v_i$.

\subsubsection{Optimization Based on the IB Principle}

There is redundancy between the high-order representation of the brain network $\mathbf{X}_h^l$ and the global representation of the brain network $\mathbf{X}_t^l$ that contains short- and long-range dependencies. To reduce redundant information while preserving effective information in the updated node features $\mathbf{X}^l$, we apply the IB principle to optimize the fusion process. The optimization objective is:
\begin{equation}
    \min_{\mathbb{P}(\mathbf{X}^{l}, \mathbf{X}_h^{l},\mathbf{X}_t^{l}) \in \Omega} \,-I(Y;\mathbf{X}^l)+\gamma_2 I(\mathbf{X}_h^l, \mathbf{X}_t^l;\mathbf{X}^l)\,.
    \label{IB}
\end{equation}
Minimizing $-I(Y;\mathbf{X}^l)$ ensures that the fused features effectively retain task-relevant information, while minimizing $I(\mathbf{X}_h^l,\mathbf{X}_t^l;\mathbf{X}^{l})$  removes redundant information. The trade-off hyperparameter $\gamma_2$ is to balance the weights of two items.
In the $l$-th layer, $\mathbf{X}^l$ is read out and mapped to the predicted label $\hat{y}_f^l$ through a learnable parameter matrix.
Similar to Section \ref{sec:section3.3.2}, the minimization of the first term in Eq.(\ref{IB}) can be replaced by the cross-entropy loss, i.e.,
\begin{equation}
I(Y; \mathbf{X}^l) \rightarrow -\mathcal{L}_{CE}(\hat{y}^l_f,Y)\,.
\end{equation}
The second term in Eq.(\ref{IB}) has an optimizable upper bound, which is given as follows:

\begin{proposition}
\label{prop3}
For any distribution
$\mathbb{Q}(\mathbf{X}^l)$, we have
\begin{equation}
I(\mathbf{X}_h^l,\mathbf{X}_t^l;\mathbf{X}^l) \notag \le D_{KL} \left( \mathbb{P}(\mathbf{X}^{l} \mid \mathbf{X}_h^{l},\mathbf{X}_t^{l}) \parallel \mathbb{Q}(\mathbf{X}^{l})\right)\,.
\end{equation}
The proof of the proposition can be found in Appendix \ref{app:appendixb}. To specify the upper bound for $I(\mathbf{X}_h^l,\mathbf{X}_t^l;\mathbf{X}^l)$, we set $\mathbb{Q}(\mathbf{X}^l)$ as a mixture of Gaussians with learnable parameters: $\mathbb{Q}(\mathbf{X}^l) \sim \sum_{i=1}^{m} \tilde{w}_i \, \mbox{Gaussian}(\tilde{\mu}_{0,i}, \tilde{\sigma}_{0,i}^2)$. The parameters are computed based on the fused representation $\mathbf{X}^l$, and sampling is performed from a Gaussian distribution to obtain $\mathbb{P}(\mathbf{X}^{l} \mid \mathbf{X}_h^{l},\mathbf{X}_t^{l})\sim \text{Gaussian}(\tilde{\mu}_{l}, \tilde{\sigma}_{l}^2)$.
Thus, the estimation of $I(\mathbf{X}_h^l,\mathbf{X}_t^l;\mathbf{X}^l)$ is written as:
\begin{equation}  I(\mathbf{X}_h^l,\mathbf{X}_t^l;\mathbf{X}^l)\rightarrow \log ( \Phi( \mathbf{X}^l; \tilde{\mu}_{l}, \tilde{\sigma}_{l}^2 ) ) 
    - \log ( \sum_{i=1}^m \tilde{w}_i\Phi( \mathbf{X}^l; \tilde{\mu}_{0,i}, \tilde{\sigma}_{0,i}^2 ) )\,.
\end{equation}
By plugging the estimations of both mutual information terms into Eq.(\ref{IB}), the loss function for the node-level adaptive fusion process guided by IB principle in the $l$-th layer can be obtained:
\begin{equation}
    \mathcal{L}_{F}^l = \mathcal{L}_{CE}(\hat{y}^l_f,Y)+ \gamma_2(\log ( \Phi( \mathbf{X}^l; \tilde{\mu}_{l}, \tilde{\sigma}_{l}^2 ) ) 
    - \log ( \sum_{i=1}^m \tilde{w}_i\Phi( \mathbf{X}^l; \tilde{\mu}_{0,i}, \tilde{\sigma}_{0,i}^2 ) ))\,.
\end{equation}
\end{proposition}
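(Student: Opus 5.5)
The inequality in Proposition~\ref{prop3} is the standard variational upper bound on mutual information, and I would prove it directly from the definition. The interpretation of the right-hand side is the main thing to get right. The KL divergence must be read as \emph{averaged} over the joint law of the conditioning variables:
\begin{equation*}
D_{KL}\bigl(\mathbb{P}(\mathbf{X}^{l}\mid\mathbf{X}_h^{l},\mathbf{X}_t^{l})\parallel\mathbb{Q}(\mathbf{X}^{l})\bigr)
=\mathbb{E}_{\mathbb{P}(\mathbf{X}_h^l,\mathbf{X}_t^l)}\Bigl[\mathbb{E}_{\mathbb{P}(\mathbf{X}^l\mid\mathbf{X}_h^l,\mathbf{X}_t^l)}\log\frac{\mathbb{P}(\mathbf{X}^l\mid\mathbf{X}_h^l,\mathbf{X}_t^l)}{\mathbb{Q}(\mathbf{X}^l)}\Bigr].
\end{equation*}
This is exactly how the single-sample estimator $\log\Phi(\mathbf{X}^l;\tilde\mu_l,\tilde\sigma_l^2)-\log\sum_i\tilde w_i\Phi(\mathbf{X}^l;\tilde\mu_{0,i},\tilde\sigma_{0,i}^2)$ is used. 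I would state this convention explicitly at the start.

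First, I would write the mutual information as an expected log-ratio against the true marginal:
\begin{equation*}
I(\mathbf{X}_h^l,\mathbf{X}_t^l;\mathbf{X}^l)=\mathbb{E}_{\mathbb{P}(\mathbf{X}^l,\mathbf{X}_h^l,\mathbf{X}_t^l)}\Bigl[\log\frac{\mathbb{P}(\mathbf{X}^l\mid\mathbf{X}_h^l,\mathbf{X}_t^l)}{\mathbb{P}(\mathbf{X}^l)}\Bigr].
\end{equation*}
Second, I would insert the arbitrary distribution $\mathbb{Q}(\mathbf{X}^l)$ by multiplying and dividing inside the logarithm. This splits the expression into two pieces. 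The first piece is the averaged KL term above. The second is
\begin{equation*}
\mathbb{E}_{\mathbb{P}(\mathbf{X}^l)}\Bigl[\log\frac{\mathbb{Q}(\mathbf{X}^l)}{\mathbb{P}(\mathbf{X}^l)}\Bigr]=-D_{KL}\bigl(\mathbb{P}(\mathbf{X}^l)\parallel\mathbb{Q}(\mathbf{X}^l)\bigr).
\end{equation*}
It depends only on $\mathbf{X}^l$, because the integrand no longer involves $(\mathbf{X}_h^l,\mathbf{X}_t^l)$, so those variables can be marginalized out.

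Third, Gibbs' inequality gives $D_{KL}(\mathbb{P}(\mathbf{X}^l)\parallel\mathbb{Q}(\mathbf{X}^l))\ge 0$. Dropping this nonpositive term yields the claimed bound, with equality iff $\mathbb{Q}=\mathbb{P}(\mathbf{X}^l)$ almost everywhere.

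The only step needing care is the technical one. I must assume the relevant densities exist, for instance that $\mathbf{X}^l$ is stochastic through the Gaussian reparameterization $\mathbb{P}(\mathbf{X}^l\mid\mathbf{X}_h^l,\mathbf{X}_t^l)=\mathrm{Gaussian}(\tilde\mu_l,\tilde\sigma_l^2)$, so that mutual information is finite. I must also assume $\mathbb{Q}$ has full support wherever $\mathbb{P}(\mathbf{X}^l)$ does, so the log-ratios are well defined. This holds for the Gaussian-mixture choice of $\mathbb{Q}$.

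If needed, I would then note that the displayed estimator is an unbiased Monte Carlo estimate of the bound. It uses one joint sample of $(\mathbf{X}_h^l,\mathbf{X}_t^l)$ and one $\mathbf{X}^l$ drawn from the Gaussian posterior. Beyond this, no earlier result of the paper is needed. The argument mirrors the one for Proposition~\ref{prop2}, applied with the pair $(\mathbf{X}_h^l,\mathbf{X}_t^l)$ as the conditioning variable.
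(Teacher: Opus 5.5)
Your proof is correct and follows the same route as the paper's appendix proof: write the mutual information as $\mathbb{E}\bigl[\log\frac{\mathbb{P}(\mathbf{X}^l\mid\mathbf{X}_h^l,\mathbf{X}_t^l)}{\mathbb{Q}(\mathbf{X}^l)}\bigr]-D_{KL}\bigl(\mathbb{P}(\mathbf{X}^l)\parallel\mathbb{Q}(\mathbf{X}^l)\bigr)$, then drop the nonnegative marginal KL term. You also state two things the paper leaves implicit: that the right-hand side is a KL averaged over $\mathbb{P}(\mathbf{X}_h^l,\mathbf{X}_t^l)$, and that $\mathbf{X}^l$ must be made stochastic through the Gaussian posterior, since the deterministic fusion $\theta_i^l\mathbf{X}_{h,i}^l+(1-\theta_i^l)\mathbf{X}_{t,i}^l$ would otherwise make the mutual information infinite.
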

\subsection{Information Bottleneck-Guided Adaptive Hypergraph Transformer Layer}
The information bottleneck-guided adaptive hypergraph Transformer layer is composed of parallel information bottleneck-guided adaptive hypergraph convolution (IBAHGConv) layer and Transformer encoder (TransEncoder) layer with information bottleneck-guided node-level adaptive fusion (IBNAFusion) module. In the $l$-th layer, the high order representation $\mathbf{X}_h^l$ from the $\mathrm{IBAHGConv}^l$ layer and the global representation $\mathbf{X}_t^l$ from the $\mathrm{TransEncoder}^l$ layer are adaptively aggregated by $\mathrm{IBNAFusion}^{l}$ to obtain updated node features $\mathbf{X}^l$, as shown in the following equation:
\begin{equation}
	\mathbf{X}_h^l = \mathrm{IBAHGConv}^l\left(\mathbf{X}^{l-1}, \mathbf{H}\right)\, ,
\end{equation}
\begin{equation}
	\mathbf{X}_t^l = \mathrm{TransEcoder}^l\left(\mathbf{X}^{l-1}\right)\, ,
\end{equation}
\begin{equation}
	\mathbf{X}^l = \mathrm{IBNAFusion}^{l}\left(\mathbf{X}_h^{l} , \mathbf{X}_t^l\right)\, .
\end{equation}
After $L$ layers, the final node embeddings $\mathbf{X}^L$ effectively capture information from various dependencies in the brain network.
\subsection{Loss Function}

By concatenating the node embeddings $\mathbf{X}^L$, the graph-level representation $\mathbf{\hat{x}}$ is obtained, which is then fed into an MLP to predict the label $\hat{y}$. During training, the overall loss function $\mathcal{L}$ is composed of the cross-entropy loss function $\mathcal{L}_{CE}$ for supervised disease prediction, the loss $\mathcal{L}_H$ for the adaptive hypergraph convolution part, and the loss $\mathcal{L}_{F}$ for the node-level adaptive fusion part:
\begin{equation}
    \mathcal{L} = \mathcal{L}_{CE}(\hat{y},Y)+ \sum_{l=1}^{L}( \lambda_1\mathcal{L}^l_{H}+\lambda_2\mathcal{L}^l_{F})\,,
\end{equation}
where $L$ denotes the number of layers in the information bottleneck-guided adaptive hypergraph Transformer, and $\lambda_1$ and $\lambda_2$ are trade-off hyperparameters balancing different losses.

\section{Experiments}
\subsection{Experiments Settings\label{sec:exp_setting}}
\noindent\textbf{Datasets and Preprocessing.}
The proposed method is evaluated on two brain network analysis related fMRI datasets, ABIDE \cite{heinsfeld2018identification} and ADNI \cite{jack2008alzheimer} . The ABIDE dataset includes brain imaging data from 871 subjects collected across 17 international sites, with 403 subjects diagnosed with Autism Spectrum Disorder (ASD). A subset of the ADNI dataset is used in this study, consisting of 66 patients with Alzheimer's Disease (AD) and 139 Normal Controls (NCs).

We preprocess the fMRI data using the standard steps provided by the Data Processing Assistant for Resting-State Function (DPARSF) toolkit \cite{song2011rest}, then use the Anatomical Automatic Labeling (AAL) brain template \cite{rolls2020automated} to divide the brain space into ROIs and extract the corresponding BOLD signals. The FC matrix is obtained by calculating the Pearson correlation coefficients between the ROIs.

\noindent\textbf{Metrics.}
Four machine learning and medical diagnostic-specific metrics are used: accuracy (ACC), sensitivity (SEN),  specificity (SPE) and area under the receiver operating characteristic curve (AUC).
We record the mean and standard deviation across 10 random runs on the test dataset.

\noindent\textbf{Implementation Details.}
In the proposed method, we set the value of $k$ to 3 in the KNN hypergraph modeling. The number of layers $L$ in the adaptive hypergraph Transformer and the number of heads $M$ in the multi-head attention module are set to 2 and 4, respectively. For all datasets, we randomly divide the training
set, evaluation set, and test set by the ratio of 7 : 1 : 2. During training, we use the Adam optimizer \cite{kingma2014adam}, with an initial learning rate set to $1 \times 10^{-4}$. The number of epochs is set to 300. All our experiments are implemented in PyTorch and trained on one NVIDIA 4090.

\subsection{Comparison Experiments}
\noindent\textbf{Baselines.}
The selected baselines correspond to three categories. The first category includes CNN-based method BrainNetCNN \cite{kawahara2017brainnetcnn}.  The second category includes graph or hypergraph-based learning methods, such as ContrastPool \cite{xu2024contrastive}, BrainGNN \cite{li2021braingnn}, BrainIB \cite{zheng2024brainib}, $\mbox{HGNN}^+$ \cite{gao2022hgnn+}, and FC-HAT \cite{ji2022fc}. The third category consists of Transformer-based methods, including BrainNetTF \cite{kan2022brain}, Com-BrainTF \cite{bannadabhavi2023community} and ALTER \cite{yu2024long}. The codes are reproduced based on the released codes.

\noindent\textbf{Results.}
Table \ref{table: compare} shows the comparison results between the proposed and baseline methods. On all datasets, IBAHGT significantly outperforms the baseline methods. The hypergraph-based methods achieved higher accuracy in most cases compared to graph-based methods, indicating that hypergraphs can capture high-order correlations that are beneficial for classification. Compared to these methods, our proposed approach further improves accuracy on the ABIDE dataset (ASD vs. NC: 77.6\%) and the ADNI dataset (AD vs. NC: 86.3\%). 
The performance improvement is attributed to our method, which captures the MIMR high-order correlations and long-range dependencies in the brain network, and condenses the information from both at the ROI level to obtain an efficient representation.

{\setlength{\tabcolsep}{2pt}
\begin{table*}[t]
\scriptsize
    \caption{Experimental results of the comparison methods.}
    \begin{minipage}{\linewidth}
        \centering   
        \resizebox{1\textwidth}{!}{
        \begin{tabular}{cllllllll}
        \toprule
        \multicolumn{1}{c}{{Datasets (Tasks)}} & \multicolumn{4}{c}{{ABIDE (ASD vs. NC)}} & \multicolumn{4}{c}{{ADNI (AD vs. NC)}} \\ 
        \cmidrule(lr){1-1} \cmidrule(lr){2-5} \cmidrule(lr){6-9}
        {Method} & {ACC (\%)} & {SPE (\%)} & {SEN (\%)} & {AUC (\%)} &  {ACC (\%)} & {SPE (\%)} & {SEN (\%)} & {AUC (\%)} \\ 
        \midrule
        BrainNetCNN & 66.2$\pm$1.9 & 74.3$\pm$4.1 & 56.8$\pm$3.1 & 68.3$\pm$2.6 & 73.2$\pm$2.7 & 79.3$\pm$4.2 & 60.0$\pm$5.8 & 66.3$\pm$4.7  \\
        
        ContrastPool & 63.0$\pm$2.5 & 71.1$\pm$8.3 & 53.5$\pm$4.6 & 68.1$\pm$7.7 & 70.7$\pm$4.1 & 72.1$\pm$5.7 & 67.7$\pm$10.2 & 69.5$\pm$3.9\\
        
        BrainGNN & 64.5$\pm$2.7 & 72.6$\pm$7.7 & 55.0$\pm$5.5 & 69.7$\pm$7.2 & 73.2$\pm$4.1 & 74.3$\pm$4.2 & 70.8$\pm$12.3 & 71.3$\pm$6.1  \\
        
        BrainIB& 68.3$\pm$1.9 & 76.4$\pm$5.4 & 58.8$\pm$5.0 & 70.2$\pm$1.9 & 74.1$\pm$2.5 & 78.6$\pm$5.1 & 64.6$\pm$11.5 & 69.0$\pm$6.5  \\
        
        $\mbox{HGNN}^+$& 68.9$\pm$2.1 & 77.0$\pm$4.4 & 59.3$\pm$5.9 & 71.0$\pm$1.9 & 74.6$\pm$2.9 & 81.4$\pm$4.2 & 60.0$\pm$5.8 & 69.2$\pm$6.8  \\
        
        FC-HAT & 70.7$\pm$3.5 & 75.5$\pm$5.0 & 65.0$\pm$6.5 & 72.0$\pm$4.8 & 77.1$\pm$3.7 & 84.3$\pm$3.6 & 61.5$\pm$4.9 & 69.1$\pm$2.0  \\
        
        BrainNetTF& 71.4$\pm$1.1 & 76.8$\pm$5.0 & 65.0$\pm$7.2 & 72.2$\pm$1.4 & 78.0$\pm$2.7 & 85.7$\pm$2.3 & 61.5$\pm$4.9 & 70.6$\pm$1.8  \\
        
        Com-BrainTF& 72.1$\pm$1.5 & 74.3$\pm$3.3 & 69.5$\pm$6.0 & 73.3$\pm$2.4 & 79.0$\pm$3.3 & 86.4$\pm$2.7 & 63.1$\pm$5.8 & 71.4$\pm$2.5  \\
        
        ALTER & 74.4$\pm$1.2 & 78.5$\pm$1.8 & 69.5$\pm$3.0 & 75.3$\pm$1.9& 81.5$\pm$2.5 & 87.9$\pm$1.8 & 67.7$\pm$5.8 & 71.9$\pm$2.8    \\
        
        IBAHGT & \textbf{77.6$\pm$0.8} & \textbf{80.4$\pm$3.1} & \textbf{74.3$\pm$3.8} & \textbf{80.5$\pm$2.0} & \textbf{86.3$\pm$1.2} & \textbf{88.6$\pm$4.7} & \textbf{81.5$\pm$7.9} & \textbf{82.8$\pm$5.2}  \\
        \bottomrule
        \end{tabular}}
    \end{minipage}
\label{table: compare}
\end{table*}}

\subsection{Ablation Study}
To validate the effectiveness of information bottleneck-guided adaptive hypergraph convolution (IBAHGConv) and information bottleneck-guided adaptive node-level fusion (IBNAFusion), ablation study are performed on two datasets, with results shown in Table \ref{table: ablation}. Using only IBAHGConv or the Transformer encoder (TransEncoder) performs worse than the combined method, indicating that integrating high-order information with long-range dependency information for comprehensive analysis enables a more thorough capture of abnormal changes in brain networks. On the ABIDE dataset, replacing IBAHGConv with $\mbox{HGNN}^+$ and AHGConv (without the HIB principle) led to a decrease in accuracy by 3.5\% and 2.5\%, respectively. This demonstrates that IBAHGConv optimizes the message-passing process and is more effective at capturing MIMR high-order information. Furthermore, diagnostic accuracy on the ABIDE dataset decreases when directly average fusion or NAFusion without IB principle, compared to IBNAFusion, which demonstrates the effectiveness of information concise during fusion at the node level guided by the IB principle.

\subsection{Interpretation Analysis}
\noindent\textbf{Discriminative Hyperedge Analysis}
We analyze the discriminative connectivity of brain networks in diseased and normal groups for each classification task, from a high-order correlation perspective.
By performing a t-test on the hyperedge features, we identify significant hyperedges, which are visualized in Fig.\ref{fig:3}.
For ASD vs. NC,
ASD patients show missing connections with right middle frontal gyrus (MFG.R) as well as right middle temporal gyrus (MTG.R), which foreshadows defects in individual external goal-related behavior and self-consciousness, further validating the conclusions of the previous study \cite{pankow2016aberrant}. For AD vs. NC, it can be observed that the high-order correlarions represented by the hyperedge in AD patients show missing connections with PHG.R and HIP.R. Previous studies have indicated that functional network impairments in AD patients often occur in the PHG and HIP, which affect memory \cite{miller2008age}.

{\setlength{\tabcolsep}{2pt}
\begin{table*}[t]
\scriptsize
    \caption{Experimental results of the ablation study.}
    \begin{minipage}{\linewidth}
        \centering   
        \resizebox{1\textwidth}{!}{
        \begin{tabular}{cllllllll}
        \toprule
        \multicolumn{1}{c}{{Datasets (Tasks)}} & \multicolumn{4}{c}{{ABIDE (ASD vs. NC)}} & \multicolumn{4}{c}{{ADNI (AD vs. NC)}} \\ 
        \cmidrule(lr){1-1} \cmidrule(lr){2-5} \cmidrule(lr){6-9}
        {Method} & {ACC (\%)} & {SPE (\%)} & {SEN (\%)} & {AUC (\%)} &  {ACC (\%)} & {SPE (\%)} & {SEN (\%)} & {AUC (\%)} \\ 
        \midrule
        TransEnoder &  69.3$\pm$0.9 & 77.9$\pm$4.5 & 59.3$\pm$4.2 & 72.7$\pm$5.3 & 76.1$\pm$2.8 & 77.9$\pm$6.9 & 72.3$\pm$10.4 & 75.6$\pm$5.8 \\

        IBAHGconv & 70.9$\pm$1.3 & 81.7$\pm$3.3 & 58.3$\pm$2.9 & 73.4$\pm$5.1& 78.0$\pm$2.2 & 82.9$\pm$5.3 & 67.7$\pm$10.2 & 77.3$\pm$3.8  \\

        IBAHGT w/o IBAHGConv & 74.1$\pm$1.4 & 76.6$\pm$4.9 & 71.3$\pm$6.4 & 77.8$\pm$3.4 &81.9$\pm$2.5 & 84.3$\pm$3.6 & 76.9$\pm$8.4 & 80.4$\pm$2.7 \\

        IBAHGT w/o HIB & 75.1$\pm$1.5 & 78.1$\pm$4.1 & 71.5$\pm$6.5 & 79.2$\pm$2.3 &83.4$\pm$2.4 & 84.3$\pm$3.6 & 81.5$\pm$7.9 & 81.0$\pm$5.1 \\
        
        IBAHGT w/o IBNAFusion & 73.0$\pm$1.3 & 77.5$\pm$4.6 & 67.8$\pm$6.4 & 77.4$\pm$3.5 & 81.0$\pm$1.8 & 85.7$\pm$3.9 & 70.8$\pm$5.8 & 80.3$\pm$3.9  \\
        
        IBAHGT w/o IB & 74.8$\pm$1.2 & 76.8$\pm$4.7 & 72.5$\pm$7.3 & 78.9$\pm$2.3 &83.9$\pm$2.0 & 85.7$\pm$3.2 & 80.0$\pm$6.2 & 81.8$\pm$4.6  \\

        IBAHGT & \textbf{77.6$\pm$0.8} & \textbf{80.4$\pm$3.1} & \textbf{74.3$\pm$3.8} & \textbf{80.5$\pm$2.0}& \textbf{86.3$\pm$1.2} & \textbf{88.6$\pm$4.7} & \textbf{81.5$\pm$7.9} & \textbf{82.8$\pm$5.2}  \\

        \bottomrule
        \end{tabular}}
    \end{minipage}
\label{table: ablation}
\end{table*}}
\begin{figure}[t]
\begin{center}
\includegraphics[width=0.9\linewidth]{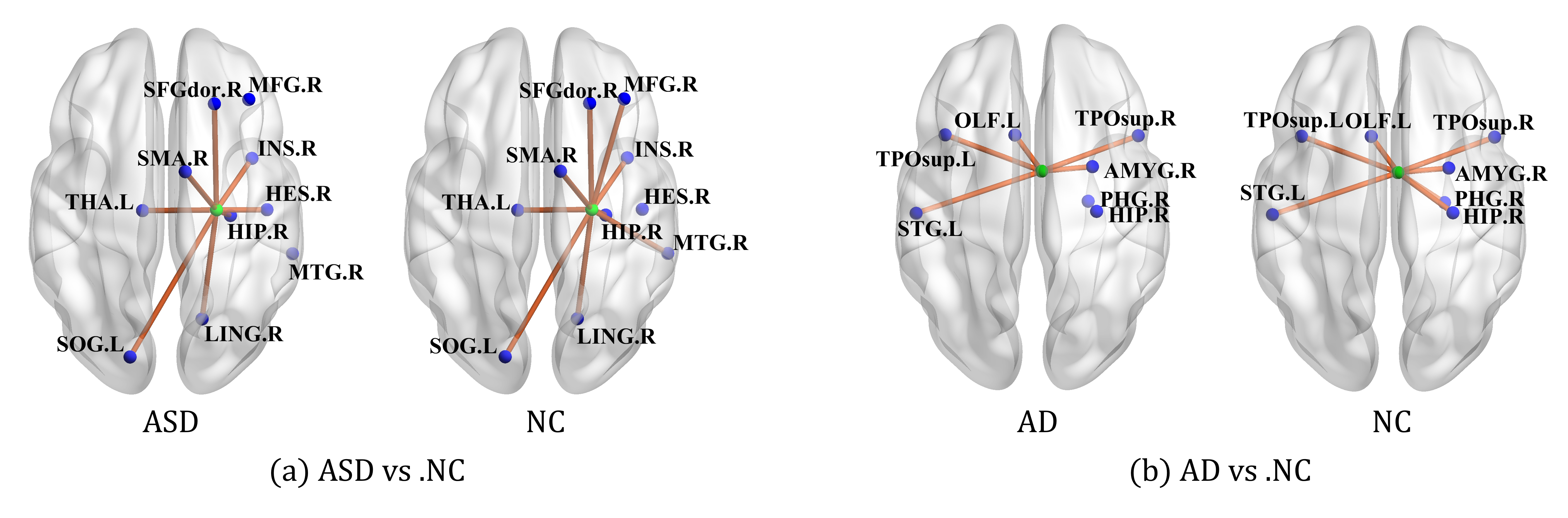}
\caption{Visualization of discriminative hyperedges for each classification. In each figure, the blue balls are the ROI contained in the hyperedge and the green balls are the center of the involved ROIs.}
\label{fig:3}
\end{center}
\end{figure}

\noindent\textbf{Analysis of Fusion Weights for Important ROIs}
To evaluate the influence of ROIs on the classification tasks, we measure the significance of the nodes using the t-test method and visualize the top 10 most important ROIs with the smallest p-values, displaying the fusion weights of each ROI through color differences. For ASD vs. NC, the identified important ROIs include the left precuneus (PCUN.L), right rectus (REC.R), right middle temporal gyrus (MTG.R), superior frontal gyrus, medial (SFGmed.L) and IPL.R, etc. As mentioned in \cite{kana2015aberrant}, the activations in PCUN and REC will be reduced when ASD patients inference mental states themselves or others. In addition, we found that the fusion weight of the MTG.R is larger, indicating a greater inclination towards high-order correlations. Previous studies have shown that MTG is critical for high-order cognitive functions, especially in language comprehension through collaboration with multiple other ROIs \cite{patterson2007you}. 
The fusion weight of the PCUN.L is relatively small, indicating that PCUN is more inclined to capture long-range dependencies. 
For AD vs. NC, the identified important ROIs include the HIP.R, left superior frontal gyrus, medial (SFGmed.L), right inferior frontal gyrus, triangular part (IFGtriang.R), right amygdala (AMYG.R), and left caudate nucleus (CAU.L), etc. 
The fusion weight of the HIP is relatively small, indicating that it relies more on long-range dependencies. Previous research has reported that low-frequency activity in the hippocampus can drive functional connectivity between the cortices of the two hemispheres \cite{chan2017low}, highlighting the HIP's key role in long-distance communication. 

\begin{figure}[t]
\includegraphics[width=\linewidth]{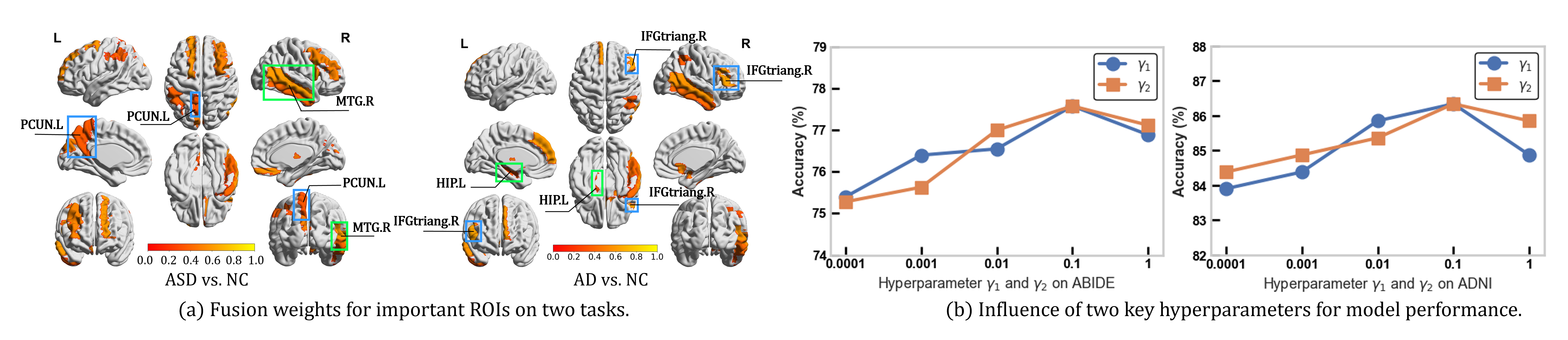}
\caption{Visualization of important ROIs with their fusion weights (Color bar represents the magnitude of the fusion weights) and the hyperparameters influence.}
\label{fig:4}
\end{figure}
\subsection{Hyperparameter Sensitivity Analysis}
To analyze how the trade-off hyperparameters $\gamma_1$ and $\gamma_2$ affect model performance, we perform a hyperparameter search and plot the optimal model performance on the ABIDE and ADNI datasets for each case where $\gamma_1 \in \{0.0001, 0.001, 0.01, 0.1, 1\}$ and $\gamma_2 \in \{0.0001, 0.001, 0.01, 0.1, 1\}$, as shown in Fig.\ref{fig:4}(b). We observe that as the magnitude of $\gamma_1$ increases, the accuracy on both datasets initially rises and then decreases, reaching a peak at $\gamma_1 = 0.1$. A similar trend is observed with $\gamma_2$, where the IBAHGT achieves the highest classification accuracy when $\gamma_2 = 0.1$.

\section{Conclusion}

In this paper, the proposed IBAHGT learns the MIMR high-order correlations and short- and long-range dependencies in brain networks within a unified framework. It suppresses redundancy while enabling fine-grained fusion of multi-source information to obtain efficient representations for brain disease diagnosis. The information bottleneck-guided adaptive hypergraph convolution introduces the HIB principle, adaptively learning message-passing weights to capture MIMR high-order information. 
Information bottleneck-guided node-level fusion assigns weights based on each node's information preference, effectively integrating high-order and global information. The proposed method achieves state-of-the-art performance on two datasets, offering new perspectives for understanding ROIs' interactions and brain information processing patterns.

\bibliographystyle{unsrt}
\bibliography{reference}

@article{seguin2023brain,
  title={Brain network communication: concepts, models and applications},
  author={Seguin, Caio and Sporns, Olaf and Zalesky, Andrew},
  journal={Nature reviews neuroscience},
  volume={24},
  number={9},
  pages={557--574},
  year={2023},
  publisher={Nature Publishing Group UK London}
}

@article{avena2018communication,
  title={Communication dynamics in complex brain networks},
  author={Avena-Koenigsberger, Andrea and Misic, Bratislav and Sporns, Olaf},
  journal={Nature reviews neuroscience},
  volume={19},
  number={1},
  pages={17--33},
  year={2018},
  publisher={Nature Publishing Group UK London}
}

@article{deco2021rare,
  title={Rare long-range cortical connections enhance human information processing},
  author={Deco, Gustavo and Perl, Yonathan Sanz and Vuust, Peter and Tagliazucchi, Enzo and Kennedy, Henry and Kringelbach, Morten L},
  journal={Current Biology},
  volume={31},
  number={20},
  pages={4436--4448},
  year={2021},
  publisher={Elsevier}
}

@article{vaswani2017attention,
  title={Attention is all you need},
  author={Vaswani, Ashish and Shazeer, Noam and Parmar, Niki and Uszkoreit, Jakob and Jones, Llion and Gomez, Aidan N and Kaiser, {\L}ukasz and Polosukhin, Illia},
  journal={Advances in neural information processing systems},
  volume={30},
  year={2017}
}

@article{bai2021hypergraph,
  title={Hypergraph convolution and hypergraph attention},
  author={Bai, Song and Zhang, Feihu and Torr, Philip HS},
  journal={Pattern Recognition},
  volume={110},
  pages={107637},
  year={2021},
  publisher={Elsevier}
}

@article{yang2023mapping,
  title={Mapping multi-modal brain connectome for brain disorder diagnosis via cross-modal mutual learning},
  author={Yang, Yanwu and Ye, Chenfei and Guo, Xutao and Wu, Tao and Xiang, Yang and Ma, Ting},
  journal={IEEE Transactions on Medical Imaging},
  volume={43},
  number={1},
  pages={108--121},
  year={2023},
  publisher={IEEE}
}

@article{ye2023rh,
  title={RH-BrainFS: regional heterogeneous multimodal brain networks fusion strategy},
  author={Ye, Hongting and Zheng, Yalu and Li, Yueying and Zhang, Ke and Kong, Youyong and Yuan, Yonggui},
  journal={Advances in Neural Information Processing Systems},
  volume={36},
  pages={59286--59303},
  year={2023}
}

@article{wang2023dynamic,
  title={Dynamic weighted hypergraph convolutional network for brain functional connectome analysis},
  author={Wang, Junqi and Li, Hailong and Qu, Gang and Cecil, Kim M and Dillman, Jonathan R and Parikh, Nehal A and He, Lili},
  journal={Medical image analysis},
  volume={87},
  pages={102828},
  year={2023},
  publisher={Elsevier}
}

@article{tishby2000information,
  title={The information bottleneck method},
  author={Tishby, Naftali and Pereira, Fernando C and Bialek, William},
  journal={arXiv preprint physics/0004057},
  year={2000}
}

@inproceedings{luo2019significance,
  title={Significance-aware information bottleneck for domain adaptive semantic segmentation},
  author={Luo, Yawei and Liu, Ping and Guan, Tao and Yu, Junqing and Yang, Yi},
  booktitle={Proceedings of the IEEE/CVF international conference on computer vision},
  pages={6778--6787},
  year={2019}
}

@article{peng2018variational,
  title={Variational discriminator bottleneck: Improving imitation learning, inverse rl, and gans by constraining information flow},
  author={Peng, Xue Bin and Kanazawa, Angjoo and Toyer, Sam and Abbeel, Pieter and Levine, Sergey},
  journal={arXiv preprint arXiv:1810.00821},
  year={2018}
}

@inproceedings{wang2020learning,
  title={Learning efficient multi-agent communication: An information bottleneck approach},
  author={Wang, Rundong and He, Xu and Yu, Runsheng and Qiu, Wei and An, Bo and Rabinovich, Zinovi},
  booktitle={International conference on machine learning},
  pages={9908--9918},
  year={2020},
  organization={PMLR}
}

@article{dilokthanakul2016deep,
  title={Deep unsupervised clustering with gaussian mixture variational autoencoders},
  author={Dilokthanakul, Nat and Mediano, Pedro AM and Garnelo, Marta and Lee, Matthew CH and Salimbeni, Hugh and Arulkumaran, Kai and Shanahan, Murray},
  journal={arXiv preprint arXiv:1611.02648},
  year={2016}
}

@article{shine2016dynamics,
  title={The dynamics of functional brain networks: integrated network states during cognitive task performance},
  author={Shine, James M and Bissett, Patrick G and Bell, Peter T and Koyejo, Oluwasanmi and Balsters, Joshua H and Gorgolewski, Krzysztof J and Moodie, Craig A and Poldrack, Russell A},
  journal={Neuron},
  volume={92},
  number={2},
  pages={544--554},
  year={2016},
  publisher={Elsevier}
}

@article{liu2017complex,
  title={Complex brain network analysis and its applications to brain disorders: a survey},
  author={Liu, Jin and Li, Min and Pan, Yi and Lan, Wei and Zheng, Ruiqing and Wu, Fang-Xiang and Wang, Jianxin},
  journal={Complexity},
  volume={2017},
  number={1},
  pages={8362741},
  year={2017},
  publisher={Wiley Online Library}
}

@article{supekar2008network,
  title={Network analysis of intrinsic functional brain connectivity in Alzheimer's disease},
  author={Supekar, Kaustubh and Menon, Vinod and Rubin, Daniel and Musen, Mark and Greicius, Michael D},
  journal={PLoS computational biology},
  volume={4},
  number={6},
  pages={e1000100},
  year={2008},
  publisher={Public Library of Science San Francisco, USA}
}

@article{benson2016higher,
  title={Higher-order organization of complex networks},
  author={Benson, Austin R and Gleich, David F and Leskovec, Jure},
  journal={Science},
  volume={353},
  number={6295},
  pages={163--166},
  year={2016},
  publisher={American Association for the Advancement of Science}
}

@article{dajani2016local,
  title={Local brain connectivity across development in autism spectrum disorder: A cross-sectional investigation},
  author={Dajani, Dina R and Uddin, Lucina Q},
  journal={Autism Research},
  volume={9},
  number={1},
  pages={43--54},
  year={2016},
  publisher={Wiley Online Library}
}

@article{park2013structural,
  title={Structural and functional brain networks: from connections to cognition},
  author={Park, Hae-Jeong and Friston, Karl},
  journal={Science},
  volume={342},
  number={6158},
  pages={1238411},
  year={2013},
  publisher={American Association for the Advancement of Science}
}

@article{barttfeld2011big,
  title={A big-world network in ASD: dynamical connectivity analysis reflects a deficit in long-range connections and an excess of short-range connections},
  author={Barttfeld, Pablo and Wicker, Bruno and Cukier, Sebasti{\'a}n and Navarta, Silvana and Lew, Sergio and Sigman, Mariano},
  journal={Neuropsychologia},
  volume={49},
  number={2},
  pages={254--263},
  year={2011},
  publisher={Elsevier}
}

@incollection{von1994correlation,
  title={The correlation theory of brain function},
  author={Von Der Malsburg, Christoph},
  booktitle={Models of neural networks: Temporal aspects of coding and information processing in biological systems},
  pages={95--119},
  year={1994},
  publisher={Springer}
}

@article{luna2004emergence,
  title={The emergence of collaborative brain function: FMRI studies of the development of response inhibition},
  author={Luna, Beatriz and Sweeney, John A},
  journal={Annals of the New York Academy of Sciences},
  volume={1021},
  number={1},
  pages={296--309},
  year={2004},
  publisher={Wiley Online Library}
}

@article{bessadok2022graph,
  title={Graph neural networks in network neuroscience},
  author={Bessadok, Alaa and Mahjoub, Mohamed Ali and Rekik, Islem},
  journal={IEEE Transactions on Pattern Analysis and Machine Intelligence},
  volume={45},
  number={5},
  pages={5833--5848},
  year={2022},
  publisher={IEEE}
}

@article{qu2021brain,
  title={Brain functional connectivity analysis via graphical deep learning},
  author={Qu, Gang and Hu, Wenxing and Xiao, Li and Wang, Junqi and Bai, Yuntong and Patel, Beenish and Zhang, Kun and Wang, Yu-Ping},
  journal={IEEE Transactions on Biomedical Engineering},
  volume={69},
  number={5},
  pages={1696--1706},
  year={2021},
  publisher={IEEE}
}

@article{niu2023applications,
  title={Applications of hypergraph-based methods in classifying and subtyping psychiatric disorders: a survey},
  author={Niu, Ju and Du, Yuhui},
  journal={Radiology Science},
  volume={2},
  number={01},
  pages={83--95},
  year={2023},
  publisher={Compuscript Ltd. Bay 11a, Shannon Industrial Est., Shannon, Co. Clare, Ireland}
}

@inproceedings{feng2019hypergraph,
  title={Hypergraph neural networks},
  author={Feng, Yifan and You, Haoxuan and Zhang, Zizhao and Ji, Rongrong and Gao, Yue},
  booktitle={Proceedings of the AAAI conference on artificial intelligence},
  volume={33},
  number={01},
  pages={3558--3565},
  year={2019}
}

@article{kawahara2017brainnetcnn,
  title={BrainNetCNN: Convolutional neural networks for brain networks; towards predicting neurodevelopment},
  author={Kawahara, Jeremy and Brown, Colin J and Miller, Steven P and Booth, Brian G and Chau, Vann and Grunau, Ruth E and Zwicker, Jill G and Hamarneh, Ghassan},
  journal={NeuroImage},
  volume={146},
  pages={1038--1049},
  year={2017},
  publisher={Elsevier}
}

@article{zheng2024brainib,
  title={Brainib: Interpretable brain network-based psychiatric diagnosis with graph information bottleneck},
  author={Zheng, Kaizhong and Yu, Shujian and Li, Baojuan and Jenssen, Robert and Chen, Badong},
  journal={IEEE Transactions on Neural Networks and Learning Systems},
  year={2024},
  publisher={IEEE}
}

@article{ji2022fc,
  title={FC--HAT: Hypergraph attention network for functional brain network classification},
  author={Ji, Junzhong and Ren, Yating and Lei, Minglong},
  journal={Information Sciences},
  volume={608},
  pages={1301--1316},
  year={2022},
  publisher={Elsevier}
}

@article{kan2022brain,
  title={Brain network transformer},
  author={Kan, Xuan and Dai, Wei and Cui, Hejie and Zhang, Zilong and Guo, Ying and Yang, Carl},
  journal={Advances in Neural Information Processing Systems},
  volume={35},
  pages={25586--25599},
  year={2022}
}

@inproceedings{bannadabhavi2023community,
  title={Community-aware transformer for autism prediction in fmri connectome},
  author={Bannadabhavi, Anushree and Lee, Soojin and Deng, Wenlong and Ying, Rex and Li, Xiaoxiao},
  booktitle={International Conference on Medical Image Computing and Computer-Assisted Intervention},
  pages={287--297},
  year={2023},
  organization={Springer}
}

@article{gao2022hgnn+,
  title={Hgnn+: General hypergraph neural networks},
  author={Gao, Yue and Feng, Yifan and Ji, Shuyi and Ji, Rongrong},
  journal={IEEE Transactions on Pattern Analysis and Machine Intelligence},
  volume={45},
  number={3},
  pages={3181--3199},
  year={2022},
  publisher={IEEE}
}

@article{xu2024contrastive,
  title={Contrastive graph pooling for explainable classification of brain networks},
  author={Xu, Jiaxing and Bian, Qingtian and Li, Xinhang and Zhang, Aihu and Ke, Yiping and Qiao, Miao and Zhang, Wei and Sim, Wei Khang Jeremy and Guly{\'a}s, Bal{\'a}zs},
  journal={IEEE Transactions on Medical Imaging},
  year={2024},
  publisher={IEEE}
}

@article{li2021braingnn,
  title={Braingnn: Interpretable brain graph neural network for fmri analysis},
  author={Li, Xiaoxiao and Zhou, Yuan and Dvornek, Nicha and Zhang, Muhan and Gao, Siyuan and Zhuang, Juntang and Scheinost, Dustin and Staib, Lawrence H and Ventola, Pamela and Duncan, James S},
  journal={Medical Image Analysis},
  volume={74},
  pages={102233},
  year={2021},
  publisher={Elsevier}
}

@article{yu2024long,
  title={Long-range brain graph transformer},
  author={Yu, Shuo and Jin, Shan and Li, Ming and Sarwar, Tabinda and Xia, Feng},
  journal={Advances in Neural Information Processing Systems},
  volume={37},
  pages={24472--24495},
  year={2024}
}

@article{heinsfeld2018identification,
  title={Identification of autism spectrum disorder using deep learning and the ABIDE dataset},
  author={Heinsfeld, Anibal S{\'o}lon and Franco, Alexandre Rosa and Craddock, R Cameron and Buchweitz, Augusto and Meneguzzi, Felipe},
  journal={NeuroImage: Clinical},
  volume={17},
  pages={16--23},
  year={2018},
  publisher={Elsevier}
}

@article{jack2008alzheimer,
  title={The Alzheimer's disease neuroimaging initiative (ADNI): MRI methods},
  author={Jack Jr, Clifford R and Bernstein, Matt A and Fox, Nick C and Thompson, Paul and Alexander, Gene and Harvey, Danielle and Borowski, Bret and Britson, Paula J and L. Whitwell, Jennifer and Ward, Chadwick and others},
  journal={Journal of Magnetic Resonance Imaging: An Official Journal of the International Society for Magnetic Resonance in Medicine},
  volume={27},
  number={4},
  pages={685--691},
  year={2008},
  publisher={Wiley Online Library}
}

@article{song2011rest,
  title={REST: a toolkit for resting-state functional magnetic resonance imaging data processing},
  author={Song, Xiao-Wei and Dong, Zhang-Ye and Long, Xiang-Yu and Li, Su-Fang and Zuo, Xi-Nian and Zhu, Chao-Zhe and He, Yong and Yan, Chao-Gan and Zang, Yu-Feng},
  journal={PloS one},
  volume={6},
  number={9},
  pages={e25031},
  year={2011},
  publisher={Public Library of Science San Francisco, USA}
}

@article{rolls2020automated,
  title={Automated anatomical labelling atlas 3},
  author={Rolls, Edmund T and Huang, Chu-Chung and Lin, Ching-Po and Feng, Jianfeng and Joliot, Marc},
  journal={Neuroimage},
  volume={206},
  pages={116189},
  year={2020},
  publisher={Elsevier}
}

@article{kingma2014adam,
  title={Adam: A method for stochastic optimization},
  author={Kingma, Diederik P and Ba, Jimmy},
  journal={arXiv preprint arXiv:1412.6980},
  year={2014}
}

@article{agam2010reduced,
  title={Reduced cognitive control of response inhibition by the anterior cingulate cortex in autism spectrum disorders},
  author={Agam, Yigal and Joseph, Robert M and Barton, Jason JS and Manoach, Dara S},
  journal={Neuroimage},
  volume={52},
  number={1},
  pages={336--347},
  year={2010},
  publisher={Elsevier}
}

@article{pankow2016aberrant,
  title={Aberrant salience is related to dysfunctional self-referential processing in psychosis},
  author={Pankow, Anne and Katthagen, Teresa and Diner, Sarah and Deserno, Lorenz and Boehme, Rebecca and Kathmann, Nobert and Gleich, Tobias and Gaebler, Michael and Walter, Henrik and Heinz, Andreas and others},
  journal={Schizophrenia bulletin},
  volume={42},
  number={1},
  pages={67--76},
  year={2016},
  publisher={Oxford University Press US}
}

@article{kana2015aberrant,
  title={Aberrant functioning of the theory-of-mind network in children and adolescents with autism},
  author={Kana, Rajesh K and Maximo, Jose O and Williams, Diane L and Keller, Timothy A and Schipul, Sarah E and Cherkassky, Vladimir L and Minshew, Nancy J and Just, Marcel Adam},
  journal={Molecular autism},
  volume={6},
  pages={1--12},
  year={2015},
  publisher={Springer}
}

@article{patterson2007you,
  title={Where do you know what you know? The representation of semantic knowledge in the human brain},
  author={Patterson, Karalyn and Nestor, Peter J and Rogers, Timothy T},
  journal={Nature reviews neuroscience},
  volume={8},
  number={12},
  pages={976--987},
  year={2007},
  publisher={Nature Publishing Group UK London}
}

@article{chan2017low,
  title={Low-frequency hippocampal--cortical activity drives brain-wide resting-state functional MRI connectivity},
  author={Chan, Russell W and Leong, Alex TL and Ho, Leon C and Gao, Patrick P and Wong, Eddie C and Dong, Celia M and Wang, Xunda and He, Jufang and Chan, Ying-Shing and Lim, Lee Wei and others},
  journal={Proceedings of the National Academy of sciences},
  volume={114},
  number={33},
  pages={E6972--E6981},
  year={2017},
  publisher={National Academy of Sciences}
}

@article{myers2014within,
  title={Within-patient correspondence of amyloid-$\beta$ and intrinsic network connectivity in Alzheimer’s disease},
  author={Myers, Nicholas and Pasquini, Lorenzo and G{\"o}ttler, Jens and Grimmer, Timo and Koch, Kathrin and Ortner, Marion and Neitzel, Julia and M{\"u}hlau, Mark and F{\"o}rster, Stefan and Kurz, Alexander and others},
  journal={Brain},
  volume={137},
  number={7},
  pages={2052--2064},
  year={2014},
  publisher={Oxford University Press}
}

@article{ilioska2023connectome,
  title={Connectome-wide mega-analysis reveals robust patterns of atypical functional connectivity in autism},
  author={Ilioska, Iva and Oldehinkel, Marianne and Llera, Alberto and Chopra, Sidhant and Looden, Tristan and Chauvin, Roselyne and Van Rooij, Daan and Floris, Dorothea L and Tillmann, Julian and Moessnang, Carolin and others},
  journal={Biological psychiatry},
  volume={94},
  number={1},
  pages={29--39},
  year={2023},
  publisher={Elsevier}
}

@article{marco2011sensory,
  title={Sensory processing in autism: a review of neurophysiologic findings},
  author={Marco, Elysa J and Hinkley, Leighton BN and Hill, Susanna S and Nagarajan, Srikantan S},
  journal={Pediatric research},
  volume={69},
  number={8},
  pages={48--54},
  year={2011},
  publisher={Nature Publishing Group}
}

@article{miller2008age,
  title={Age-related memory impairment associated with loss of parietal deactivation but preserved hippocampal activation},
  author={Miller, Saul L and Celone, Kim and DePeau, Kristina and Diamond, Eli and Dickerson, Bradford C and Rentz, Dorene and Pihlajam{\"a}ki, Maija and Sperling, Reisa A},
  journal={Proceedings of the National Academy of Sciences},
  volume={105},
  number={6},
  pages={2181--2186},
  year={2008},
  publisher={National Academy of Sciences}
}

@article{zhao2018evaluating,
  title={Evaluating functional connectivity of executive control network and frontoparietal network in Alzheimer’s disease},
  author={Zhao, Qinghua and Lu, Hong and Metmer, Hichem and Li, Will XY and Lu, Jianfeng},
  journal={Brain research},
  volume={1678},
  pages={262--272},
  year={2018},
  publisher={Elsevier}
}

@article{adhd2012adhd,
  title={The ADHD-200 consortium: a model to advance the translational potential of neuroimaging in clinical neuroscience},
  author={ADHD-200 consortium},
  journal={Frontiers in systems neuroscience},
  volume={6},
  pages={62},
  year={2012},
  publisher={Frontiers Research Foundation}
}
\newpage
\appendix
\section{Algorithm}

The IBAHGT algorithm is described in Algorithm 1. To clearly present the process of our algorithm, the IBAHGT algorithm is described in the form of Algorithm 1. In the implementation process, we express the information-bottleneck-guided adaptive hypergraph convolution and the information-bottleneck-guided node-level adaptive fusion in matrix form to accelerate computation.
\begin{algorithm}[]
    \SetAlgoLined
    \SetKwInOut{Input}{Input}
    \SetKwInOut{Output}{Output}
    \Input{Initial node features $\mathbf{X}^0$; Incidence matrix $\mathbf{H}$}
    \SetKwInOut{Initial}{Initialize}
    \Initial{
        The learnable weight matrix $\mathbf{W}_1^l$, $\mathbf{W}_2^l$; The learnable vector $\mathbf{a}_1^l$, $\mathbf{a}_2^l$, $\theta^l$ \\
    }
    \Output{The predicted class label $\hat{y}$}

    \For{Layers $l = 1,2,\dots,L$}{
        \SetKwBlock{StageOne}{Stage 1: Information-Bottleneck-Guided Adaptive Hypergraph Convolution}{}
        \StageOne{
            \SetKwBlock{PhaseOne}{Phase 1:Adaptive Gathering of Node Features to Hyperedges}{}
            \PhaseOne{
                \For{$e_j \in \mathcal{E}$}{
                    \For{$v_i \in \mathcal{V}$}{
                        Construct Node Set $\mathcal{N}_v(e_j) = \{v_i\in\mathcal{V} \mid \mathbf{H}_{i,j} \neq 0\}$; \\
                        $\alpha_{i,j}^l = \frac{\exp{\left(\mathbf{a}_1^l \mathbf{W}_1^l \mathbf{X}_i^{l-1}\right)}}{\sum_{v_k \in \mathcal{N}_v(e_j)} \exp{\left(\mathbf{a}_1^l \mathbf{W}_1^l \mathbf{X}_k^{l-1}\right)}}$; \\
                    }
                    $\mathbf{Z}_j^l = \sigma\left(\sum_{v_i \in \mathcal{N}_v(e_j)} \alpha_{i,j}^l \mathbf{W}_1^l \mathbf{X}_i^{l-1}\right)$;\\
                }
                $\mu_{l} \leftarrow \mathbf{Z}^l[0:f_z^l]$;\\
                $\sigma^2_{l} \leftarrow \mbox{softplus}(\mathbf{Z}^l[f_z^l:2f_z^l])$;\\
                $ \mathbb{P}(\mathbf{Z}^l \mid \mathbf{X}^{l-1},\mathbf{H}) \sim \mbox{Gaussian}(\mu_{l}, \sigma_{l}^2)$;\\
            }
            \SetKwBlock{PhaseTwo}{Phase 2:Adaptive Aggregating of Hyperedge Features to Nodes}{}
            \PhaseTwo{
                \For{$v_i \in \mathcal{V}$}{
                    \For{$e_j \in \mathcal{E}$}{
                        Construct Hyperedge Set $\mathcal{N}_e(v_i) = \{e_j\in\mathcal{E} \mid \mathbf{H}_{i,j} \neq 0\}$; \\
                        $\beta_{i,j}^l = \frac{\exp{(\mathbf{a}_2^l \mathbf{W}_2^l \mathbf{Z}_j^l)}}{\sum_{e_p \in \mathcal{N}_e(v_i)} \exp{(\mathbf{a}_2^l \mathbf{W}_2^l \mathbf{Z}_p^l)}}$;\\
                    }
                    $\mathbf{X}_{h,i}^l = \sigma\left(\sum_{e_j \in \mathcal{N}_e(v_i)} \beta_{i,j}^l \mathbf{W}_2^l \mathbf{Z}_j^l \right)$;\\
                }
                $\hat{\mu}_{l} \leftarrow \mathbf{X}_h^l[0:f_h^l]$;\\
                $\hat{\sigma}^2_{l} \leftarrow \mbox{softplus}(\mathbf{X}_h^l[f_h^l:2f_h^l])$;\\
                $\mathbb{P}(\mathbf{X}_h^l \mid \mathbf{Z}^{l},\mathbf{H}) \sim \mbox{Gaussian}(\hat{\mu}_{l}, \hat{\sigma}_{l}^2)$;\\
            }
        }
        \SetKwBlock{StageTwo}{Stage 2: Transformer Encoder}{}
        \StageTwo{
            $\mathbf{X}_t^l \leftarrow \mbox{TransEncoder}(\mathbf{X}^{l-1})$
        }
        \SetKwBlock{StageThree}{Stage 3: Information-Bottleneck-Guided Node-Level Adaptive Fusion}{}
        \StageThree{
            \For{$v_i \in \mathcal{V}$}{
                $\mathbf{X}_i^l = \theta_i^l\mathbf{X}_{h,i}^l+(1-\theta_i^l)\mathbf{X}_{t,i}^l$;\\
            }
            $\tilde{\mu}_l \leftarrow \mathbf{X}^l[0:f^l]$;\\
            $\tilde{\sigma}^2_l \leftarrow \mbox{softplus}(\mathbf{X}^l[f^l:2f^l])$;\\
            $\mathbb{P}(\mathbf{X}^l \mid \mathbf{X}_h^l,\mathbf{X}_t^l) \sim \mbox{Gaussian}(\tilde{\mu}_l, \tilde{\sigma}_l^2)$;\\
        }
    }
    $\hat{\mathbf{x}} = \mbox{Readout}(\mathbf{X}^L))$; \\
    $\hat{y} = \mbox{MLP}(\hat{\mathbf{x}})$;\\
    \Return{$\hat{y}$};\\
    \caption{Information-Bottleneck-Guided Adaptive Hypergraph Transformer}
\end{algorithm}

\section{Proof\label{app:appendixb}}
\begin{figure}[t]
\includegraphics[width=\textwidth]{imgs/markov_chain.jpg}
\caption{(a) The Markovian chain of information bottleneck-guided adaptive hypergraph convolution. (b) The Markovian chain of information bottleneck-guided node-level adaptive fusion. }
\label{fig:markov_chain}
\end{figure}

\noindent\textbf{Proof for Proposition \ref{prop1}}
We restate Proposition \ref{prop1}:
\begin{align}
I(Y; \mathbf{Z}^{l},\mathbf{X}_h^l) = I(Y; \mathbf{Z}^{l})+I(Y;\mathbf{X}_h^l\mid \mathbf{Z}^{l})\,.
\end{align}
For any probabilistic distribution functions $\mathbb{Q}_1(Y \mid \mathbf{Z}^{l})$, $\mathbb{Q}_1(Y \mid \mathbf{Z}^{l}, \mathbf{X}_h^l)$, and $\mathbb{Q}_2(Y)$, we can express $I(Y; \mathbf{Z}^{l})$ and $I(Y; \mathbf{X}_h^l \mid \mathbf{Z}^{l})$ as follows:
\begin{equation}
I(Y; \mathbf{Z}^{l})\geq 1 + \mathbb{E} \left[ \log \frac{ \mathbb{Q}_1(Y \mid \mathbf{Z}^{l})}{\mathbb{Q}_2(Y)} \right] + \mathbb{E}_{\mathbb{P}(Y) \mathbb{P}(\mathbf{Z}^{l})}\left[  \frac{\mathbb{Q}_1(Y \mid \mathbf{Z}^{l})}{\mathbb{Q}_2(Y)} \right]\,,
\end{equation}

\begin{equation}
I(Y; \mathbf{X}_h^{l}\mid \mathbf{Z}^l)\geq 1 + \mathbb{E}_{\mathbb{P}(Y,\mathbf{X}_h^l \mid \mathbf{Z}^l)} \left[ \log \frac{ \mathbb{Q}_1(Y \mid \mathbf{Z}^{l},\mathbf{X}_h^l)}{\mathbb{Q}_2(Y)} \right] + \mathbb{E}_{\mathbb{P}(Y) \mathbb{P}(\mathbf{X}_h^{l}\mid\mathbf{Z}^l)}\left[ \frac{\mathbb{Q}_1(Y \mid \mathbf{Z}^{l}, \mathbf{X}_h^l)}{\mathbb{Q}_2(Y)} \right]
\end{equation}
The Nguyen, Wainright \& Jordan's bound is used here:

\noindent\textbf{Lemma 1}
For any two random variables $\mathbf{X}_1$ and $\mathbf{X}_2$ and any function $g$: $g(\mathbf{X}_1,\mathbf{X}_2)\in\mathbb{R}$, we have 
\begin{equation}
I(X_1, X_2) \geq \mathbb{E}[g(X_1, X_2)] - \mathbb{E}_{P(X_1)P(X_2)}\left[\exp(g(X_1, X_2) - 1)\right]\,.\label{2}
\end{equation}
The above lemma is used to $(Y;\mathbf{Z}^l)$. Plugging in $1+\log \frac{ \mbox{Cat}(\hat{y}_z^l )}{\mathbb{P}(Y)}$, the right hand side of Eq.(\ref{2}) is substituted by the cross-entropy loss, i.e.,
\begin{equation}
    I(Y; \mathbf{Z}^l) \rightarrow -\mathcal{L}_{CE}(\hat{y}^l_z,Y) \,. 
\end{equation}

The above lemma is also used to $(Y;\mathbf{X}_h^l \mid \mathbf{Z}^l)$. Plugging in $1+\log \frac{ \mbox{Cat}(\hat{y}_x^l )}{\mathbb{P}(Y)}$,  it can be transformed into the following cross-entropy loss, i.e.,
\begin{equation}
I(Y; \mathbf{X}_h^l\mid\mathbf{Z}^l) \rightarrow -\mathcal{L}_{CE}(\hat{y}^l_x,Y)\,.
\end{equation}
\noindent\textbf{Proof for Proposition \ref{prop2}}
According to the Markovian chain in Fig. \ref{fig:markov_chain}(a), we restate Proposition 2: For any distributions $\mathbb{Q}(\mathbf{Z}^{l})$ and $\mathbb{Q}(\mathbf{X}_h^{l})$, we have
\begin{align}
    I(\mathbf{X}^{l-1}, \mathbf{H};\mathbf{X}_h^{l}, \mathbf{Z}^l) 
    &\le I(\mathbf{X}^{l-1}, \mathbf{H};\mathbf{Z}^l) + I(\mathbf{Z}^{l}, \mathbf{H};\mathbf{X}_h^l) \notag \\
    &= \mathbb{E}\left( \log \frac{\mathbb{P}(\mathbf{Z}^l \mid \mathbf{X}^{l-1}, \mathbf{H})}{\mathbb{Q}(\mathbf{Z}^l)} \right) 
    - D_{KL}\left( \mathbb{P}(\mathbf{Z}^l ) \| \mathbb{Q}(\mathbf{Z}^l) \right) \notag \\
    & + \mathbb{E}\left( \log \frac{\mathbb{P}(\mathbf{X}_h^l \mid \mathbf{Z}^l, \mathbf{H})}{\mathbb{Q}(\mathbf{X}_h^l)} \right) 
    - D_{KL}\left( \mathbb{P}(\mathbf{X}_h^l) \| \mathbb{Q}(\mathbf{X}_h^l) \right) \notag \\
    &\le \mathbb{E}\left( \log \frac{\mathbb{P}(\mathbf{Z}^l \mid \mathbf{X}^{l-1}, \mathbf{H})}{\mathbb{Q}(\mathbf{Z}^l)} \right) 
    + \mathbb{E}\left( \log \frac{\mathbb{P}(\mathbf{X}_h^l \mid \mathbf{Z}^l, \mathbf{H})}{\mathbb{Q}(\mathbf{X}_h^l)} \right) \notag\\
    &=\mbox{ZIB}^l+\mbox{XIB}^l
\end{align}
\begin{equation}
    \mbox{ZIB}^l = D_{KL} \left( \mathbb{P}(\mathbf{Z}^{l} \mid \mathbf{X}^{l-1},\mathbf{H}) \parallel \mathbb{Q}(\mathbf{Z}^{l}) \right) , \mbox{XIB}^l = D_{KL} \left( \mathbb{P}(\mathbf{X}_h^{l} \mid \mathbf{Z}^{l},\mathbf{H}) \parallel 
    \mathbb{Q}(\mathbf{X}_h^{l}) \right)\,.
\end{equation}

\noindent\textbf{Proof for Proposition \ref{prop3}}
According to the Markovian chain in Fig. \ref{fig:markov_chain}(b), we restate Proposition 3: For any distribution
$\mathbb{Q}(\mathbf{X}^l)$, we have
\begin{align}
    I(\mathbf{X}_h^l,\mathbf{X}_t^l;\mathbf{X}^l) &= \mathbb{E}( \log \frac{\mathbb{P}(\mathbf{X}^l \mid \mathbf{X}_h^{l},\mathbf{X}_t^l)}{\mathbb{Q}(\mathbf{X}^l)} ) 
    - D_{KL}( \mathbb{P}(\mathbf{X}^l) \| \mathbb{Q}(\mathbf{X}^l) ) \notag\\
    &\le \mathbb{E}( \log \frac{\mathbb{P}(\mathbf{X}^l \mid \mathbf{X}_h^{l},\mathbf{X}_t^l)}{\mathbb{Q}(\mathbf{X}^l)} ) \notag \\
    &= D_{KL} ( \mathbb{P}(\mathbf{X}^{l} \mid \mathbf{X}_h^{l},\mathbf{X}_t^{l}) \parallel \mathbb{Q}(\mathbf{X}^{l}) \,.
\end{align}

\section{Additional Experiments.\label{app:appendixc}}
\subsection{Analysis of Attention Maps and Discriminative Connections}
To evaluate the contribution of the self-attention mechanism in the Transformer module, we visualize the average attention maps along the group level, as shown in Fig.\ref{fig:dis}. Additionally, to clearly display the variations within and between sub-networks, we divide the entire brain network into eight sub-networks: visual network (VN), auditory network (AN), basal ganglia network (BLN), default mode network (DMN), sensorimotor network (SMN), salience network (SN), motor network (MN), and cognitive control network (CCN).  In the ASD vs. NC classification task, the attention scores within the SMN are relatively high, indicating that the inter-ROI connectivity within the SMN plays a significant role in ASD prediction. This is consistent with previous studies that have shown reduced functional connectivity within the SMN \cite{ilioska2023connectome}, reflecting alterations in sensory and motor processing \cite{marco2011sensory}.  In the AD vs. NC classification task, the positions with higher attention scores are primarily located within the
DMN. Previous studies have indicated that the substantial accumulation of $\beta$-amyoid protein in the DMN of AD patients leads to reduced functional connectivity in this region, which is associated with cognitive impairments in AD patients \cite{myers2014within}.

\begin{figure}[t]
\includegraphics[width=\textwidth]{imgs/discrimination.jpg}
\caption{Visualization of attention maps and discriminative connections. connections within the same neural system (\textcolor{VN}{VN}, \textcolor{AN}{AN}, \textcolor{BLN}{BLN}, \textcolor{DMN}{DMN}, \textcolor{SMN}{SMN}, \textcolor{SN}{SN}, \textcolor{MN}{MN}, \textcolor{CCN}{CCN}) are colored accordingly, while connections across different systems are colored gray.}
\label{fig:dis}
\end{figure}
We perform a t-test on attention maps to identify discriminative connections. For clarity, only connections with significant differences (p-value < 0.01) are shown in Fig. \ref{fig:dis}. In the ASD vs. NC classification task, the identified discriminative connections include those between the left anterior cingulate and paracingulate gyri (ACG.L) and right Inferior parietal (IPL.R), as well as between the right superior frontal gyrus, dorsolateral (SFGdor.R). This validates previous research, which suggests that changes in these connections are often associated with the repeat stereotypical behaviors commonly observed in individuals with ASD \cite{agam2010reduced}. In the AD vs. NC classification task, the discriminative connections include the connection between the left middle frontal gyrus (MFG.L) and the left superior frontal gyrus, dorsolateral (SFGdor.L). This is consistent with previous findings \cite{zhao2018evaluating}, which suggest that changes in functional connectivity between the right frontal lobe and the superior frontal gyrus in AD patients are associated with cognitive decline.
\subsection{Hyperparameter Analysis}
\begin{figure}[t]
\includegraphics[width=\textwidth]{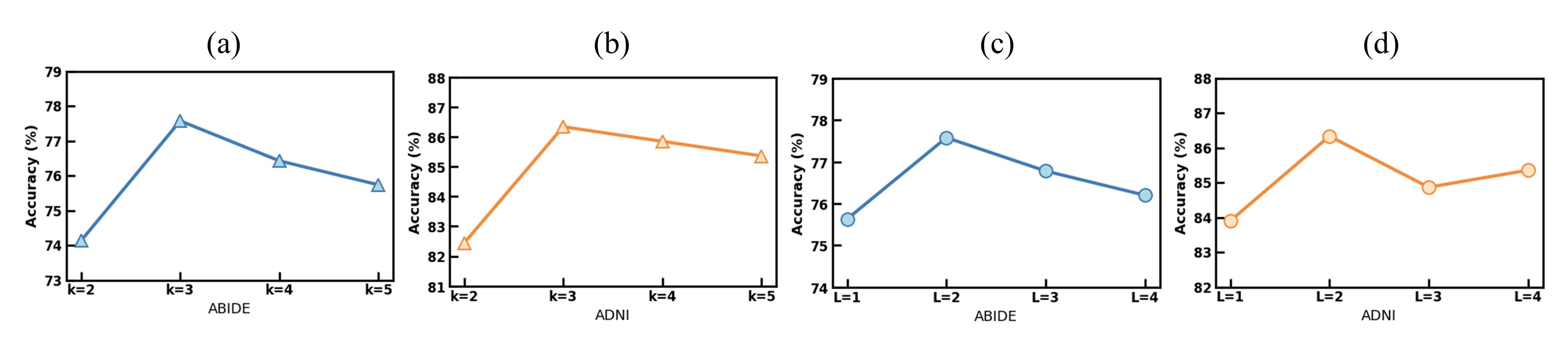}
\caption{The effect of varying hyperparameters. (a) and (c) are experiments on ABIDE dataset. (b) and (d) are experiments on ADNI dataset.}
\label{fig:sen}
\end{figure}
To evaluate the influence of hyperparameters on classification performance, we conduct controlled variable experiments on two main hyperparameters: the value of $k$ for KNN-based hypergraph construction and the number of layers $L$ in the information bottleneck-guided hypergraph Transformer. Fig.\ref{fig:sen}(a) and Fig.\ref{fig:sen}(b) illustrate the impact of different values of $k$ on two datasets. The accuracy is highest when $k = 3$ for both classification tasks. The accuracy is highest when $k = 3$ for both classification tasks. When $k = 2$, the hypergraph degenerates into a graph, causing a loss of high-order information in the brain network. When $k > 3$, the information aggregation range increases, and the representations of different nodes become more homogeneous, resulting in a drop in accuracy. 

Fig.\ref{fig:sen}(c) and Fig.\ref{fig:sen}(d) show the variation in accuracy with respect to the number of layers $L$ on two datasets. We find that when the number of layers $L=2$, the model achieved the highest accuracy in both classification tasks. When $L<2$, the model's depth was insufficient to effectively capture the complex dependencies within the brain network. On the other hand, when $L>2$, an excessive number of layers led to overfitting, resulting in a decline in performance. 

\subsection{Running Time}
\setlength{\tabcolsep}{2pt}
\begin{table*}[]
\scriptsize
    \caption{Running time with different methods.}
    \label{table:time}  %
    \centering   
    \resizebox{0.7\textwidth}{!}{
    \begin{tabular}{ccc}
    \toprule
    \multicolumn{1}{c}{{Method}} & \multicolumn{1}{c}{{Running Time on ABIDE(s)}} & \multicolumn{1}{c}{{Running Time on ADNI(s)}} \\ 
    \midrule
    BrainNetCNN & $39.75\pm1.03$ & $8.04\pm0.98$\\
    ContrastPool &$334.89\pm4.23$& $83.62\pm2.50$\ \\
    BrainGNN &$294.01\pm1.11$& $74.97\pm2.08$\ \\
    BrainIB &$212.43\pm1.32$& $51.66\pm1.58$\ \\
    $\mbox{HGNN}^+$ & $36.55\pm1.22$ &$7.63\pm3.22$ \\
    FC-HAT &$82.56\pm0.69$& $22.76\pm1.08$\ \\
    BrainNetTF & $49.72\pm1.02$ & $10.82\pm0.34$ \\
    Com-BrainTF & $61.26\pm2.31$& $14.32\pm0.74$\\
    ALTER &$54.87\pm0.98$ & $11.35\pm2.62$\\
    IBAHGT &$72.43\pm0.88$ & $15.97\pm0.49$  \\
    \bottomrule
    \end{tabular}}
    \label{time}
\end{table*}
Table \ref{time} shows the running time of IBAHGT compared to other methods on ABIDE and ADNI datasets. BrainGNN and ContrastPool are significantly slower than the other methods, primarily due to their pooling processes, which consume more time. BrainIB dynamically samples subgraphs during training and exhibits slower convergence, resulting in a longer training time. The hypergraph-based method FC-HAT is slower than both our IBAHGT and $\mbox{HGNN}^+$, mainly because FC-HAT dynamically updates the hypergraph structure at each layer using KNN and k-means methods, which adds additional time cost. The adaptive hypergraph convolution and node-level fusion in the proposed IBAHGT are formulated in matrix form during implementation to accelerate training on GPU. Although the training time is slightly longer compared to Transformer-based methods, this increase is acceptable given the significant performance improvement.

\subsection{Baselines and Additional Comparative Experiments on ADHD-200 Dataet}
The selected baselines mainly include three categories. The first category consists of convolutional neural network (CNN)-based method, such as BrainNetCNN \cite{kawahara2017brainnetcnn}. BrainNetCNN employs new feature extraction structures based on the topological locality characteristics of brain networks. The second category includes graph or hypergraph-based methods, such as ContrastPool \cite{xu2024contrastive}, BrainGNN \cite{li2021braingnn}, BrainIB \cite{zheng2024brainib}, $\mbox{HGNN}^{+}$ \cite{gao2022hgnn+}, and FC-HAT \cite{ji2022fc}. ContrastPool generates contrast graph through a dual-attention module to guide differentiable graph pooling, which is used to produce effective brain network representations for disease classification. BrainIB introduces the IB principle into brain network analysis to effectively identify disease-specific prominent brain network connections. $\mbox{HGNN}^{+}$ is a general hypergraph convolution method. FC-HAT combines KNN and k-means to dynamically construct hypergraphs and captures information within the hypergraph during the hypergraph aggregation attention phase. The third category consists of Transformer-based methods, including BrainNetTF \cite{kan2022brain}, Com-BrainTF \cite{bannadabhavi2023community}, and ALTER \cite{yu2024long}. BrainNetTF employs self-attention to learn the long-range dependencies between ROIs and utilizes orthogonal clustering for graph-level representation. Com-BrainTF is a hierarchical local-global transformer architecture that learns intra- and inter-community aware node embeddings. ALTER proposes a brain graph transformer to capture long-range dependencies between ROIs utilizing biased random walk. To ensure a fair comparison, we conduct experiments using the open-source code provided in the papers and the optimal parameters specified therein.

To further validate the performance of IBAHGT, the ADHD-200 dataset \cite{adhd2012adhd} is also used for comparative experiments. The ADHD-200 dataset is a multi-site neuroimaging resource for attention deficit hyperactivity disorder (ADHD) research, consisting of 357 ADHD patients and 573 normal controls (NCs) from eight international imaging centers. As shown in Table \ref{HD}, the experimental results demonstrate that IBAHGT outperforms the baselines on the majority of metrics, further validating the superiority of the proposed method in brain disease diagnosis.
{\setlength{\tabcolsep}{2pt}
\begin{table*}[t]
\scriptsize
    \caption{Comparison experiments results on the ADHD-200 dataset.}
    \label{tab:adhd}
    \centering
    \resizebox{0.65\linewidth}{!}{
        \begin{tabular}{cllll}
        \toprule
        \multicolumn{1}{c}{{Datasets (Tasks)}} & \multicolumn{4}{c}{{ADHD-200 (ADHD vs. NC)}} \\ 
        \cmidrule(lr){1-1} \cmidrule(lr){2-5}
        {Method} & ACC (\%) & SPE (\%) & SEN (\%) & AUC (\%) \\ 
        \midrule
        BrainNetCNN & 62.3$\pm$1.7 & 67.6$\pm$2.7 & 53.5$\pm$1.5 & 61.8$\pm$1.4 \\
        ContrastPool &61.9$\pm$1.6 & 69.0$\pm$4.0 & 50.4$\pm$4.2 & 60.6$\pm$3.5 \\
        BrainGNN & 62.5$\pm$1.9 & 67.3$\pm$3.3 & 54.7$\pm$4.5 & 62.3$\pm$2.1 \\
        BrainIB & 63.5$\pm$2.0 & 68.7$\pm$2.8 & 55.2$\pm$5.0 & 63.4$\pm$2.2 \\
        $\mbox{HGNN}^+$ & 64.4$\pm$1.0 & 68.9$\pm$2.5 & 57.2$\pm$5.8 & 64.1$\pm$1.6 \\
        FC-HAT & 66.5$\pm$1.1 & 70.1$\pm$2.0 & 60.6$\pm$4.5 & 66.3$\pm$1.2 \\
        BrainNetTF & 67.2$\pm$1.7 & 70.4$\pm$5.7 & 62.0$\pm$5.0 & 66.6$\pm$1.1 \\
        Com-BrainTF & 67.7$\pm$1.7 & 69.7$\pm$1.7 & 64.5$\pm$3.9 & 67.6$\pm$2.5 \\
        ALTER & 68.8$\pm$2.1 & \textbf{74.1$\pm$4.6} & 60.3$\pm$7.6 & 68.0$\pm$2.4 \\
        IBAHGT & \textbf{72.3$\pm$1.3}& 73.7$\pm$1.9 & \textbf{69.9$\pm$4.0} & \textbf{72.2$\pm$1.2} \\
        \bottomrule
        \end{tabular}}
        \label{HD}
\end{table*}

\section{Further Discussion.\label{app:appendixd}}
\subsection{Limitations.}
This study primarily focuses on functional brain networks constructed from fMRI data, providing a unified analysis of high-order relationships and both short- and long-range dependencies in brain network. However, the influence of structural brain networks constructed from DTI data on the diverse dependencies among ROIs in functional brain networks are also worthy of exploration. In future work, we will delve into the effects of structural connectivity on multiple types of dependencies in functional brain network and propose methods to integrate functional and structural brain networks for more comprehensive analyses of high-order correlations and both short- and long-range dependencies among ROIs.
\subsection{Possible Societal Impacts.}

This research involves brain disease diagnosis, and it is therefore necessary to acknowledge its potential societal impact. The proposed method contributes to the discovery of potential biomarkers, offering positive impacts for neuroscience research and computer-aided diagnosis. However, the use of artificial intelligence in diagnostic assistance may lead to misdiagnoses, which are difficult to completely avoid and could have negative impacts for both patients and society. In real-world clinical settings, AI-based methods should serve solely as decision-support tools, with final diagnostic decisions remaining the responsibility of medical professionals.

\end{document}